\definecolor{iccvblue}{rgb}{0.21,0.49,0.74}
\newmdenv[innerlinewidth=1.0pt,
linecolor=black,
backgroundcolor=cyan!60!black!5,
roundcorner=4pt,
innerleftmargin=10pt, innerrightmargin=10pt,
innertopmargin=6pt,innerbottommargin=6pt]{goal}
\newenvironment{summary}{
\begin{tcolorbox}[
    enhanced,
    attach boxed title to top center={yshift=-3mm,yshifttext=-1mm},
    colframe=green!45!black,  
    colback=green!5!white!35,    
    colbacktitle=green!40!black!60, 
    title={Section Summary},
    fonttitle=\bfseries,
    boxed title style={size=small,colframe=green!50!black},
    beforeafter skip=2pt,
    boxrule=0.75pt
]
}
{\end{tcolorbox}}
\DeclareMathOperator{\Tr}{Tr}
\DeclareMathOperator{\blkdiag}{blkdiag}
\title{
    From Linearity to Non-Linearity: \\
    How Masked  Autoencoders Capture Spatial Correlations}
\author{Anthony Bisulco\thanks{\scriptsize Equal Contribution}\\
University of Pennsylvania\\
3317 Chestnut St, Philadelphia PA\\
{\tt\small abisulco@seas.upenn.edu}
\and
Rahul Ramesh*\\
University of Pennsylvania\\
3317 Chestnut St, Philadelphia PA\\
{\tt\small rahulram@seas.upenn.edu}
\and
Randall Balestriero\\
Brown University\\
115 Waterman St, Providence, RI\\
{\tt\small randall\_balestriero@brown.edu}
\and
Pratik Chaudhari\\
University of Pennsylvania\\
3317 Chestnut St, Philadelphia PA\\
{\tt\small pratikac@seas.upenn.edu}
}
\begin{document}
\maketitle
\setcounter{page}{1}
\makeatletter{\renewcommand*{\@makefnmark}{}
\footnotetext{\scriptsize AB supported by NSF FRR 2220868, NSF IIS-RI 2212433,   ARO MURI W911NF-20-1-0080, and ONR N00014-22-1-2677, and AB, RR, and PC supported by NSF IIS-2145164, CCF-2212519, ONR N00014-22-1-2255, and DSO Laboratories, Singapore.}\makeatother}

\begin{abstract}
    Masked Autoencoders (MAEs) have emerged as a powerful pretraining technique for vision foundation models.
    Despite their effectiveness, they require extensive hyperparameter tuning (masking ratio, patch size, encoder/decoder layers) when applied to novel datasets.
    While prior theoretical works have analyzed MAEs in terms of their attention patterns and hierarchical latent variable models, the connection between MAE hyperparameters and performance on downstream tasks is relatively unexplored.
    This work investigates how MAEs learn spatial correlations in the input image.
    We analytically derive the features learned by a linear MAE and show that masking ratio and patch size can be used to select for features that capture short- and long-range spatial correlations.
    We extend this analysis to non-linear MAEs to show that MAE representations adapt to spatial correlations in the dataset, beyond second-order statistics.
    Finally, we discuss some insights on how to select MAE hyper-parameters in practice.
\end{abstract}   

\begin{figure}
    \centering
    \includegraphics[width=\linewidth]{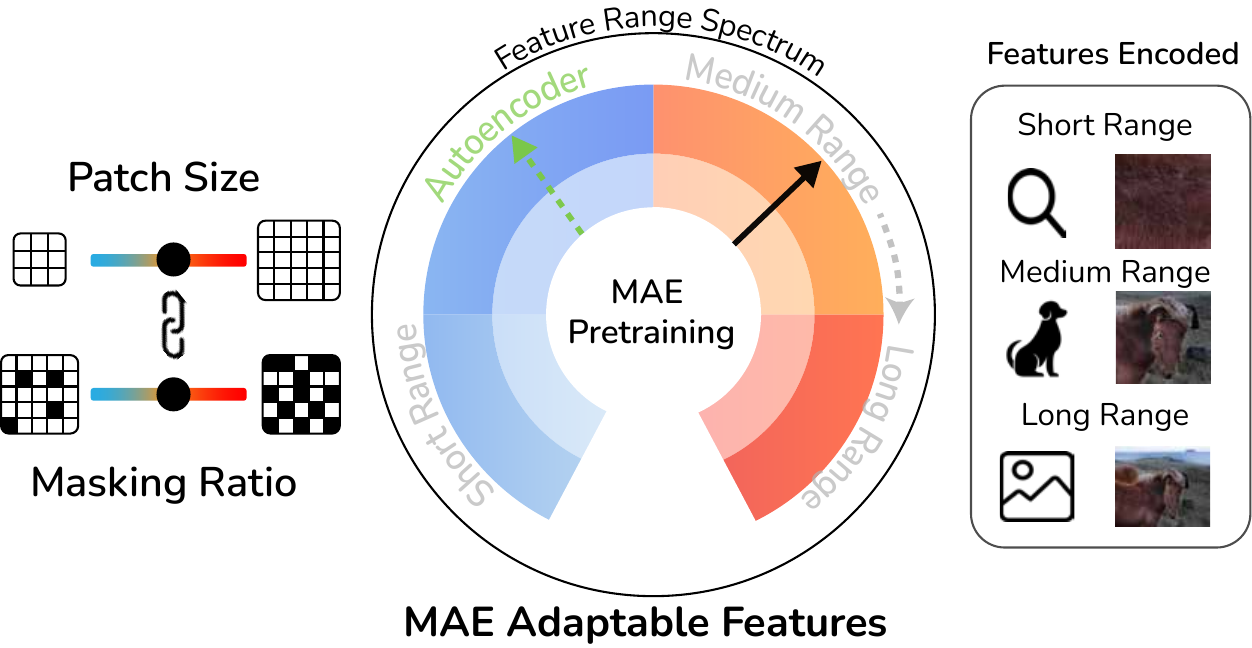}
    \caption{ MAEs~(Black Arrow) can adapt the spatial scale of features they encode through two key hyperparameters: patch size and masking ratio. Generally, increasing either of these hyperparameters yields features with a broader spatial extent. Selecting these hyperparameters allows MAEs to be used for datasets with different spatial scales. In contrast, standard autoencoders~(Green Arrow) learn short-range spatial features. On the right side of the figure, we show an example of feature spatial scales ranging from local textures to objects to global scenes. }
    \label{fig:task-scale}
    \vspace{-2mm}
\end{figure}
\vspace{-4mm}
\section{Introduction}

\todo{Make sure all figures are referenced}
\todo{Reference all content in the appendix}
\todo{Make sure results / section 2 discusses how masking ratio and patchs-size control spatial correlations}


Masked autoencoders (MAEs) are a powerful pretraining technique for images~\cite{He2021MAE, Xie2021SimMIMAS} and video~\cite{feichtenhofer2022masked, tong2022videomae, Wang2023VideoMAEVS}  
playing a key role in pretraining for foundational models such as Segment Anything~\cite{Kirillov2023SegmentA}, InternVideo~\cite{wang2024internvideo}, EVA~\cite{fang2023eva}, and Unified IO~\cite{lu2022unified}.  Despite their success, MAEs are not yet well understood---key design choices such as masking ratio, patch size, and model depth heavily rely on heuristics. In this work, we aim to better understand MAEs by analyzing their underlying mechanisms and their role in representation learning.

A key challenge in applying MAEs to new datasets or modalities is the need for extensive cross-validation to tune hyperparameters such as patch size and masking ratio. Suboptimal choices can lead to degraded task performance and unnecessary computational cost. Exhaustive search over these parameters is often impractical, as MAEs’ long training times make full cross-validation prohibitively expensive for large datasets or high-dimensional inputs. As a step towards addressing this challenge, we aim to study how masking ratio and patch size affect the representations learned by MAEs (\cref{fig:task-scale}).

Our central hypothesis is that \textit{MAEs learn spatial correlations in the input image}, with the masking ratio and patch size controlling the spatial scale of these correlations. We argue that MAEs introduce a spatial locality bias in ViT (Vision Transformer) architectures that inherently lack it, although some efforts have attempted to explicitly incorporate such biases~\citep{liu2021swin,d2021convit}. We make the following key contributions:
\begin{enumerate}
    \item We derive an analytical expression for the encoder and decoder of a linear MAE. 
    We show how MAEs can learn features that capture short- or long- range correlations by controlling the masking ratio and patch size.
    \item We show that MAEs can be viewed as learning an adaptive basis for denoising, progressively shifting from localized to global feature representations over the course of training.
    \item We provide insights for practitioners on how to select hyperparameters such as masking ratio, patch size, and number of encoder/decoder layers in MAEs. 
\end{enumerate}
\section{Linear Masked Autoencoders}\label{s:linear_mae}

Reconstruction is a popular objective for self-supervised learning, but it often fails to learn useful features for downstream perception tasks~\citep{balestriero2024learning}.
Masked reconstruction~\citep{He2021MAE,Xie2021SimMIMAS} has emerged as a popular choice for pretraining deep networks.
In this section, we study how reconstruction differs from masked reconstruction and why the latter is useful for downstream perception tasks.

\begin{figure}[!htpb]
    \centering
    \includegraphics[width=0.5\linewidth]{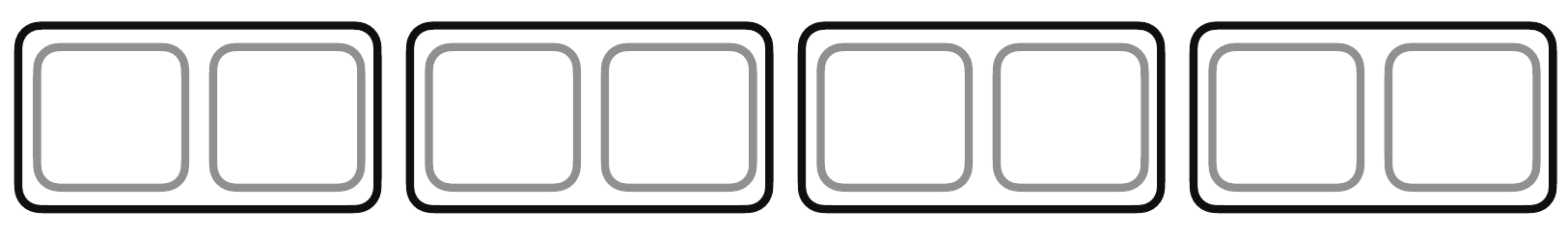}
    \caption{A $d=8$ dimensional input can be divided up into 4 patches, each of size $p=2$.}
    \label{fig:mask_illustrate}
\end{figure}

We first consider a linear MAE and analyze the solutions obtained for different masking ratios and patch sizes.
Consider input data $X \in \mathbb{R}^{n \times d}$ where $n$ is the number of samples, and each sample is a real-valued vector in $d$ dimensions.
Consider a linear encoder with weights $A \in \mathbb{R}^{d \times k}$ and a linear decoder with weights $B\in \mathbb{R}^{k \times d}$---the network has no non-linearities. The encoder projects the input from $d$ input dimensions to $k$ latent dimensions, and the decoder projects the representation back to $d$ dimensions.
A linear MAE minimizes the objective
\begin{equation}\label{eq:mae_objective}
    \ell_m(A,B) = \mathbb{E}_{R} \left[\left\| X - (R \odot X)AB \right\|^2 \right],
\end{equation}
where $R \in \{0,1\}^{n \times d}$ is a random variable denoting a mask and $\odot$ is the element-wise product.
We can arrange our $d$-dimensional input vector into patches, each of size $p$ (see~\cref{fig:mask_illustrate}).
This is similar to how images are divided into patches in ViTs~\cite{dosovitskiy2020image}. Patches are either fully masked or fully visible; this means $R_{ki} = R_{kj}$ if $i$ and $j$ belong to the same patch for any image with index $k$.
Each patch is masked according to a Bernoulli random variable with probability $m$.%
\footnote{Typical implementations of MAEs exactly mask a fraction $m$ of the patches instead of using a random variable to determine if a patch should be masked. Typically, MAEs also reconstruct only the unmasked patches at the output of the decoder, instead of employing an objective that reconstructs all patches, as we have done here.
}

\cref{lemma:mae_expectation} computes the expectation in ~\cref{eq:mae_objective} in closed form to reduce the linear MAE objective to
\begin{equation}\label{eq:mae_marginal}
    \hspace{-3mm} \ell_m(A,B) {=} \underbrace{\left\| X {-} (1{-}m) XAB \right\|^2}_{\text{reconstruction}} \hspace{-1mm}{+} m(1{-}m) \underbrace{\left\| GAB \right\|^2}_{\text{regularizer}}\hspace{-4mm}
\end{equation}
where $G \in \mathbb{R}^{d \times d}$ satisfies $G^\top G = \blkdiag_p(X^\top X)$. The block-diagonal matrix $\blkdiag_p(X^\top X)$ consists of block diagonal entries of size $p \times p$, with the off-diagonal blocks set to zero. The MAE objective in~\cref{eq:mae_marginal} consists of two loss terms: the first term is like the reconstruction loss of an autoencoder that encourages $X \approx (1-m) X A B$, the second term $\left\| G A B\right\|^2$ acts as a regularizer tuned to the data via $G$. The masking ratio $m$ controls the strength of the regularizer while the patch size controls the block-diagonal structure of the regularizer. 
Note that $m=0$ recovers the objective for a non-masked autoencoder (AE). The regularizer term here, therefore, is the ``bias'' of an MAE. It forces the representation of an MAE to deviate from that of an AE.

\subsection{Characterizing the minima of MAEs}\label{ss:minima}

We next analyze the encoder obtained in a linear MAE using~\cref{eq:mae_marginal}. The classic work of \citet{baldi1989neural} shows that AEs are closely related to principal component analysis (PCA)~\citep{baldi1989neural}. AE's encoder at any global minimum extracts the top-$k$ principal components of the data. Linear AEs, therefore, discover features that best explain the variance in the input data.
The training objective of a linear AE does not have local minima. All global minima are of the form $ A = U C^{-1}, \quad B = C U^\top$, where the columns of $U \in \mathbb{R}^{d \times k}$ are the top-$k$ eigenvectors of $X^\top X$, and $C \in \mathbb{R}^{k \times k}$ is any invertible matrix.

We can perform a similar analysis for linear MAEs to derive an analytical expression for the optimal encoder and decoder and characterize the set of all critical points of this objective. Note that while there are no non-linearities in a linear MAE, the training objective is still non-convex.
\begin{theorem}\label{thm:mae_minima}
    Let $V = (1-m)X^\top X + m \blkdiag_p(X^\top X)$ and let the columns of $U_k$ denote the top-k eigenvectors of $X^\top X V^{-1} X^\top X$. The objective in~\cref{eq:mae_objective} is minimized when the decoder 
\begin{equation}
    B = C U_k 
\end{equation}
and the encoder
\begin{equation}
    A = V^{-1} X^\top X B^\top (BB^\top)^{-1} C^{-1},
    \label{eq:A_opt}
\end{equation}
where $C$ is any invertible matrix of size $k \times k$.
\end{theorem}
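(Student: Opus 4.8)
\emph{Proof sketch.} The plan is to adapt the classical analysis of linear autoencoders due to \citet{baldi1989neural} to the regularized objective in \cref{eq:mae_marginal}. Write $S = X^\top X$. Using \cref{lemma:mae_expectation} and expanding the two squared Frobenius norms, the quadratic-in-$A$ contributions --- $(1-m)^2 S$ from the reconstruction term and $m(1-m)\blkdiag_p(S)$ from the regularizer --- fuse into $(1-m)V$, so that
\[
\ell_m(A,B) = \Tr[S] - 2(1-m)\,\Tr[SAB] + (1-m)\,\Tr\!\left[B^\top A^\top V A B\right].
\]
This is where the matrix $V$ of the statement appears, playing the role that $X^\top X$ plays in the plain-AE analysis.

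Next I would profile out the encoder. For a fixed decoder $B$ of full row rank $k$, $\ell_m$ is a convex quadratic in $A$, and $\nabla_A \ell_m = 0$ gives the stationarity condition $VABB^\top = SB^\top$, hence the unique minimizer $A^\star(B) = V^{-1}S\,B^\top(BB^\top)^{-1}$. (This uses $V \succ 0$; it is worth flagging the mild standing assumption $X^\top X \succ 0$, which makes $V^{-1}$ exist and $M := X^\top X\,V^{-1}X^\top X$ symmetric positive semidefinite.) Substituting $A^\star(B)$ back, both the linear and the quadratic terms collapse to the same trace, leaving the reduced problem
\[
\ell_m(A^\star(B),B) = \Tr[S] - (1-m)\,\Tr[M P_B], \qquad P_B := B^\top(BB^\top)^{-1}B,
\]
with $P_B$ the orthogonal projector onto the row space of $B$, of rank $k$.

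Minimizing $\ell_m$ is thus equivalent to maximizing $\Tr[M P]$ over rank-$k$ orthogonal projectors $P$. By the Ky Fan variational principle, the maximum is $\sum_{i=1}^k \lambda_i(M)$ and is attained exactly when $P$ projects onto a top-$k$ eigenspace of $M = X^\top X\,V^{-1}X^\top X$, i.e.\ $P_B = U_k U_k^\top$. This fixes the row space of $B$ but not $B$ itself, so $B = C U_k^\top$ for an arbitrary invertible $C \in \mathbb{R}^{k\times k}$; plugging this into $A^\star(B)$ recovers the stated encoder, and the product $AB = V^{-1}X^\top X\,U_kU_k^\top$ is $C$-independent, as it must be. To obtain \emph{all} critical points rather than only the global minima, I would additionally set $\nabla_B \ell_m = 0$, obtaining the coupled system $(A^\top V A)B = A^\top S$ together with $VABB^\top = SB^\top$, and then run the usual invariant-subspace argument: at any critical point the row space of $B$ is a sum of eigenspaces of $M$, $AB$ equals $V^{-1}X^\top X$ restricted to that subspace, and a second-order / strict-saddle check shows that only the top-$k$ choice is a local (hence global) minimum.

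The main obstacle is this last classification step, not the derivation of the minimizer: one must handle rank-deficient $B$ and decoders whose row space only partially aligns with the eigenspaces of $M$, show that stationarity forces the exact invariant-subspace structure (as in \citet{baldi1989neural}), and separate saddles from minima. Everything before that --- the algebraic reduction and the Ky Fan step --- is essentially routine, the only extra care being a genericity assumption, namely $\lambda_k(M) > \lambda_{k+1}(M)$ (so $U_k$ is well defined) and $X^\top X \succ 0$ (so $V$ is invertible).
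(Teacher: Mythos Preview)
Your proposal is correct and follows essentially the same route as the paper: differentiate in $A$ to obtain $A^\star(B)=V^{-1}X^\top X\,B^\top(BB^\top)^{-1}$, substitute back to reduce to maximizing $\Tr[M\,P_B]$ with $M=X^\top X\,V^{-1}X^\top X$, and then pick the top-$k$ eigenspace. The only cosmetic difference is that you invoke the Ky Fan principle directly, whereas the paper appeals to its generalized-eigenvalue lemmas (\cref{lemma:generalized_eigenvalue,lemma:maximum}) with $D=I$, which collapses to the same ordinary eigenvalue statement; your additional remarks on the saddle classification and on the standing assumptions $X^\top X\succ 0$, $\lambda_k(M)>\lambda_{k+1}(M)$ are more explicit than the paper's treatment.
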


Every critical point of the MAE objective is a subset of $k$ eigenvectors of $X^\top X V^{-1} X^\top X$ (from~\cref{lemma:generalized_eigenvalue}), i.e., there are exponentially many critical points.
The product of the encoder and decoder for an MAE comprises two parts. The first part $V^{-1} X^\top X$ whitens the data using a weighted mixture of $X^\top X$ and $\blkdiag(X^\top X)$. The second part $B^\top (BB^\top)^{-1} B$, projects the data onto the column space of $B$.

\begin{remark}
    If the input correlation matrix $X^\top X$ is itself block diagonal, i.e., spatial correlations in the input data are non-zero within a patch of size $p$, then $V = (1-m)X^\top X + m \blkdiag_p(X^\top X) = X^\top X$, which recovers the regular autoencoder solution from~\cref{thm:mae_minima}, with a patch size of $p$. In this case, a linear autoencoder and a masked autoencoder have identical minima.
\end{remark}

\paragraph{Understanding the differences between an MAE and an AE using an Ising model.}
\begin{figure}[!htpb]
    \centering
    \includegraphics[width=0.60\linewidth]{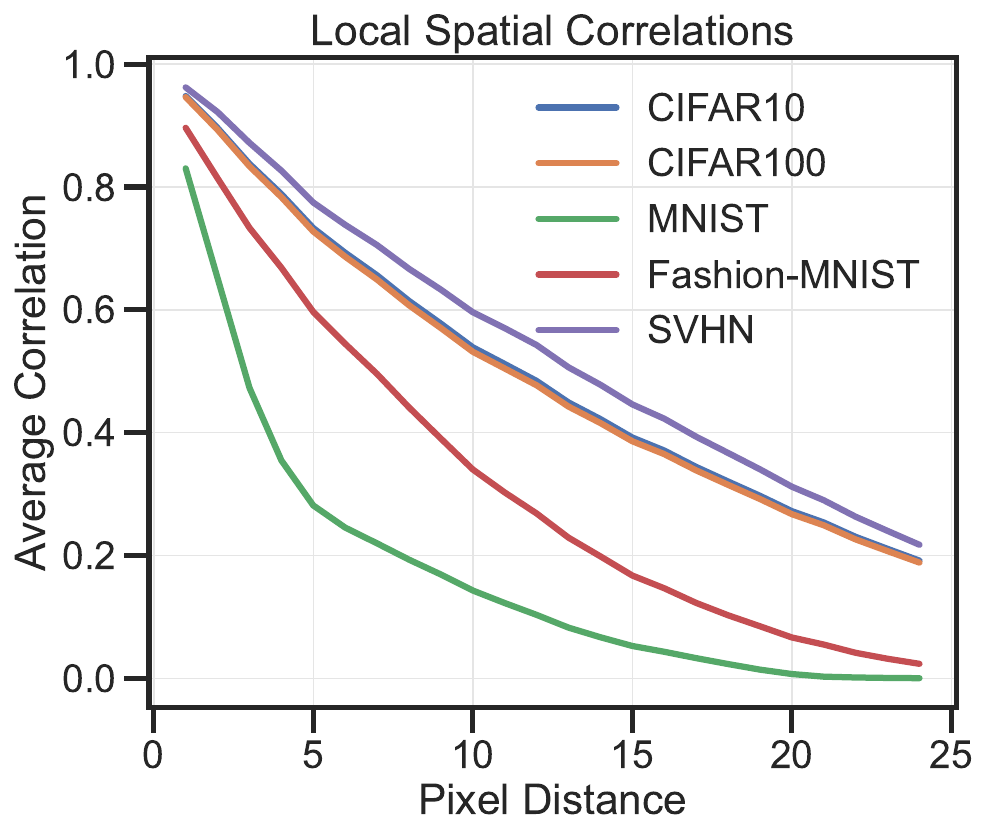}
    \caption{Spatial correlations between pixels as a function of the distance between them for different datasets. Correlation between pixels is inversely correlated to the distance for all 5 datasets, i.e., images have stronger local correlations.}
    \label{fig:local_correlations}
\end{figure}
Images exhibit strong local correlations (\cref{fig:local_correlations}) and the strength of the correlations between pixels is inversely proportional to the distance between them~\cite{natural_stats}.  This structure can be quantified using the spatial autocorrelation:
\begin{equation}
    \resizebox{.90\linewidth}{!}{$
    R(\Delta x, \Delta y) = \frac{1}{MN}\sum_x\sum_y f(x,y)f(x+\Delta x, y+\Delta y)
    $}
\end{equation}
which measures how a function $f(x,y)$ pixel values correlate across spatial distances $(\Delta x, \Delta y)$, for a 2D pixel grid of size $(M,N)$.
MAEs can exploit these inherent correlations in natural image data for masked patch reconstruction. To emulate a part of this structure, we consider data drawn from an Ising model. This model exhibits strong local correlations controlled by the coupling constant $J$. Under the Ising model, the input $x \in \{-1, 1\}^d$ has probability 
\begin{equation}
    p(x) \propto \exp \left( J x_1 x_d +  J \sum_{i=1}^{d-1} x_i x_{i+1} \right).
\end{equation}
We fit a linear AE and a linear MAE on samples from this distribution using the objective in~\cref{eq:mae_objective} for the MAE.
We approximate the correlation between $x_i$ and $x_j$ by $\langle x_i, x_j \rangle = \tanh(J)^r$, which is its value in the asymptotic limit of $d \rightarrow \infty$, where $r = \min (|i-j|, d - |i-j|)$ is the distance between $x_i$ and $x_j$~\citep{huang2009introduction}. 

We use this to approximate $X^\top X$, which can be substituted in the analytical expressions in~\cref{thm:mae_minima}.

\begin{figure}[ht]
    \centering
    \includegraphics[width=0.48\linewidth]{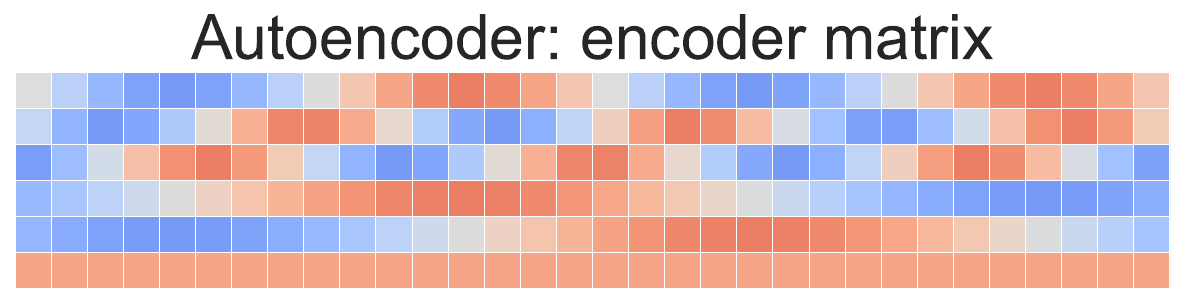}
    \includegraphics[width=0.48\linewidth]{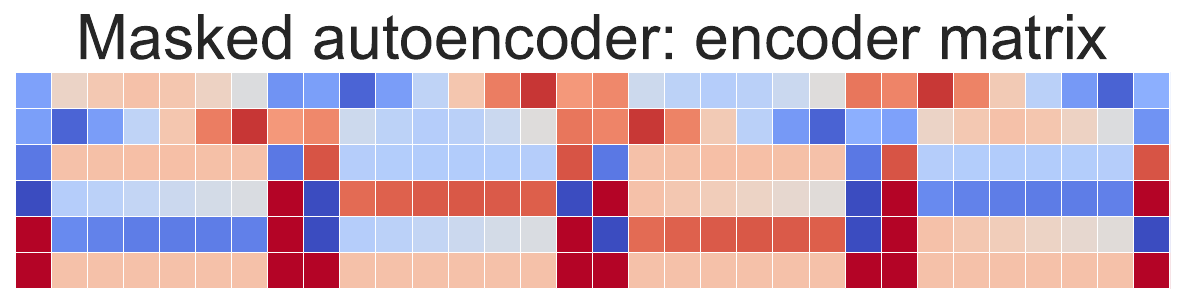}
    \includegraphics[width=0.48\linewidth]{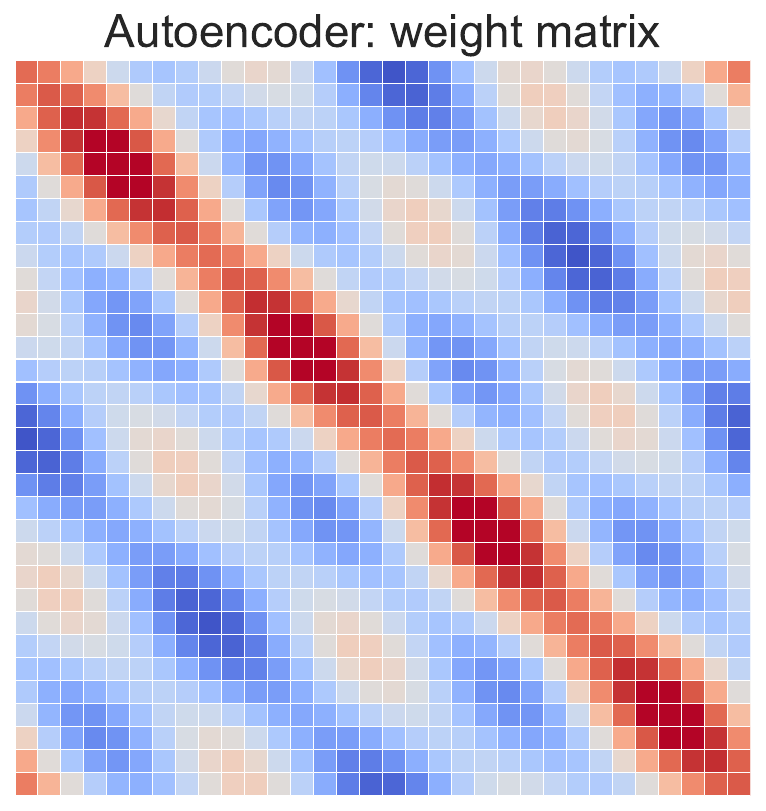}
    \includegraphics[width=0.48\linewidth]{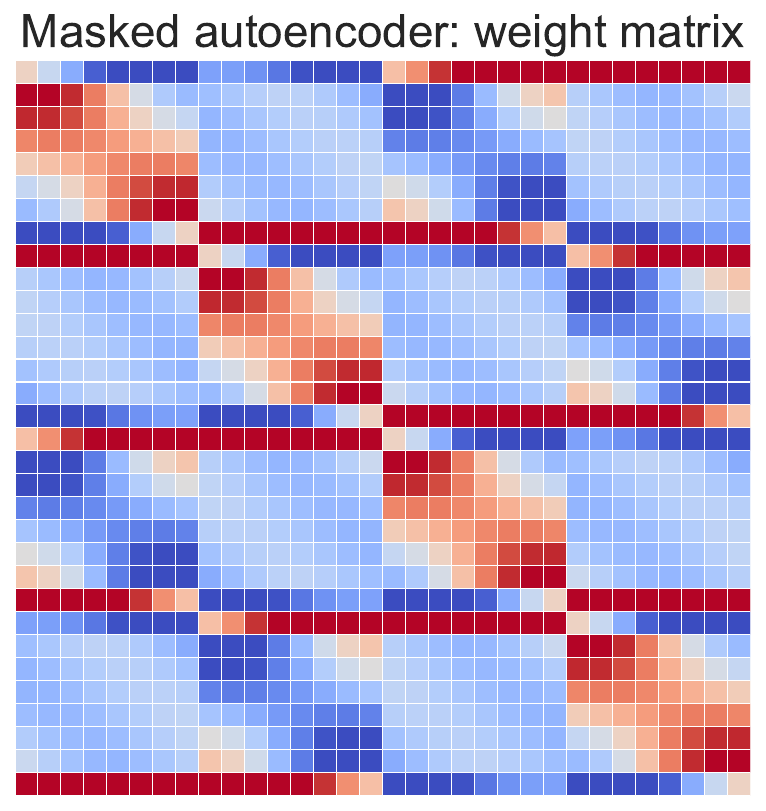}
    \caption{
        We plot the product of the encoder and decoder matrices (bottom) and the encoder weight matrix (top) for an AE (left) and MAE (right). Red indicates weights with higher magnitude, while blue indicates weights with lower magnitude.
        The plots show that MAEs learn an encoder that projects the data onto a basis that is different from a standard autoencoder.
        MAEs prioritizes features at the boundary of the patch since it is most correlated to the other patches and hence most useful for predicting them.
    }
    \label{fig:ising_mae}
\end{figure}
In~\cref{fig:ising_mae}, we plot the encoder weights ($A$) and the product of the encoder and decoder weights ($AB$) for a AE and MAE trained on the Ising model with $d=32$ dimensions and coupling constant $J=2$.
Both the MAE and AE consider an encoder that projects data from 32 to 6 dimensions.
For the MAE, we consider a patch size of 8 and a masking ratio of $m=0.5$. For the AE (\cref{fig:ising_mae} left), the matrix $AB$ has rank 6 (which is the dimensionality of the feature space). The product $AB$ should be close to identity to minimize the autoencoder objective, and this is indeed evident in the experiment.
In~\cref{fig:ising_mae}, the AE encoder learns sinusoidal features of varying frequencies, similar to those learned by linear autoencoders on natural images~\cite{pca_images}.

What is remarkable in this simple experiment is that
an MAE learns a different encoder, one that clearly prioritizes input dimensions at the boundary of the patches. Since data has strong local correlations, the boundary of a patch has the largest correlation to nearby patches and is therefore most useful for reconstructing nearby patches.
\textbf{In summary, MAEs select features that are redundantly present across patches while autoencoders pick features that best explain the variance in the data.}
MAEs are perhaps successful on downstream tasks because many perception tasks are redundant functions of the input~\citep{ramesh2024many}, and MAEs find such redundant features.

\subsection{Features of a linear MAE for natural images}
\begin{figure}[h]
    \centering
    \includegraphics[width=.8\linewidth]{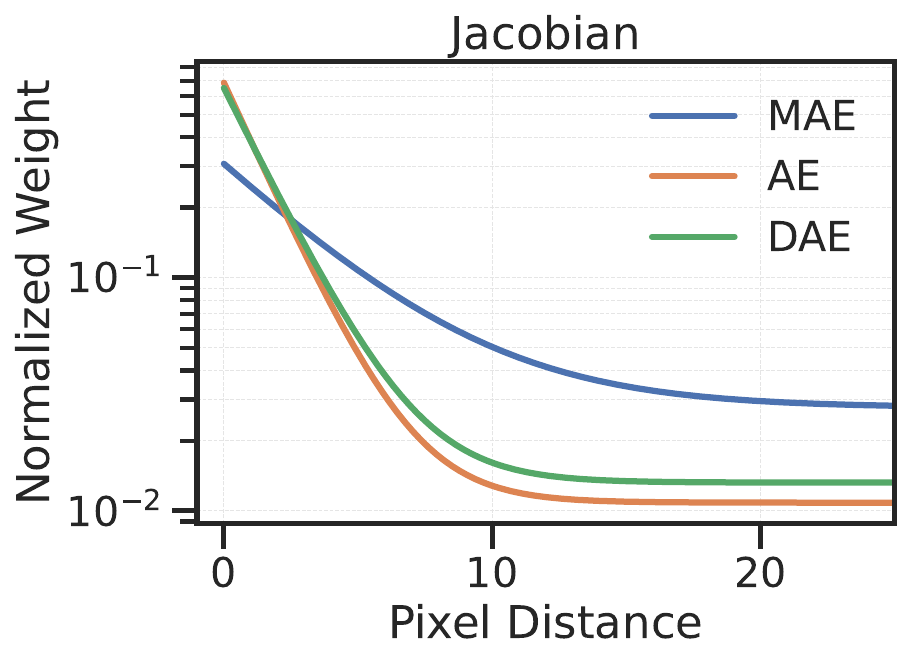}
    \caption{An exponential fit to the magnitude of the entries of the input-output Jacobian as a function of distance from the output pixel from the input pixel of different models (averaged over input and output pixels). Results are shown for three methods: MAEs with 80\%  masking ratio and patch size $p=2$, standard AEs, and Denoising Autoencoders (DAEs) with noise level  $\sigma=0.2$. Experiments are conducted on CIFAR-10 with a latent dimension of 128. In general, MAEs integrate spatial information from distant patches, as opposed to an AE or a DAE.}
    \label{fig:lin_mae_kernel}
\end{figure}

We next train on natural images from CIFAR-10 and inspect how linear MAEs, linear denoising autoencoders (DAEs), and linear AEs encode different types of features. Unlike MAEs, which perform masking, DAEs add Gaussian random noise to the input~\cite{vincent2010stacked}.
For linear encoders and decoders, DAEs (\cref{eq:dae}), denoise additive Gaussian noise with variance $\sigma^2$, and are equivalent to autoencoders with an $\ell_2$ regularization on the weights ($AB$).
\begin{equation}
\label{eq:dae}
\ell_d(A, B) = \mathbb{E}_{L \sim \mathcal{N}(0, \sigma^2)} \left[\left\| X - (X + L) AB \right\|_2^2 \right]
\end{equation}

For AEs, DAEs and MAEs, we compare the average influence that an input pixel has on an output pixel as a function of the distance between the two.
To compute this influence, we consider the Jacobian of the output with respect to the input. For example, for our linear models, the quantity $|(AB)_{ij}|$ determines the influence of input pixel $i$, towards reconstructing target pixel $j$. 
We compute the average magnitude of the normalized absolute weight of the entries of the Jacobian as a function of the distance of the output pixel from the input pixel for linear AEs, DAEs and MAEs trained on CIFAR-10 in ~\cref{fig:lin_mae_kernel}.
We find that AEs learn more localized kernels and DAEs are slightly less localized than AEs.
On the other hand, MAEs integrate spatially distant information.
This is because the MAE objective encourages the use of other patches to reconstruct a patch.

\begin{figure}[h]
    \centering
    \includegraphics[width=\linewidth]{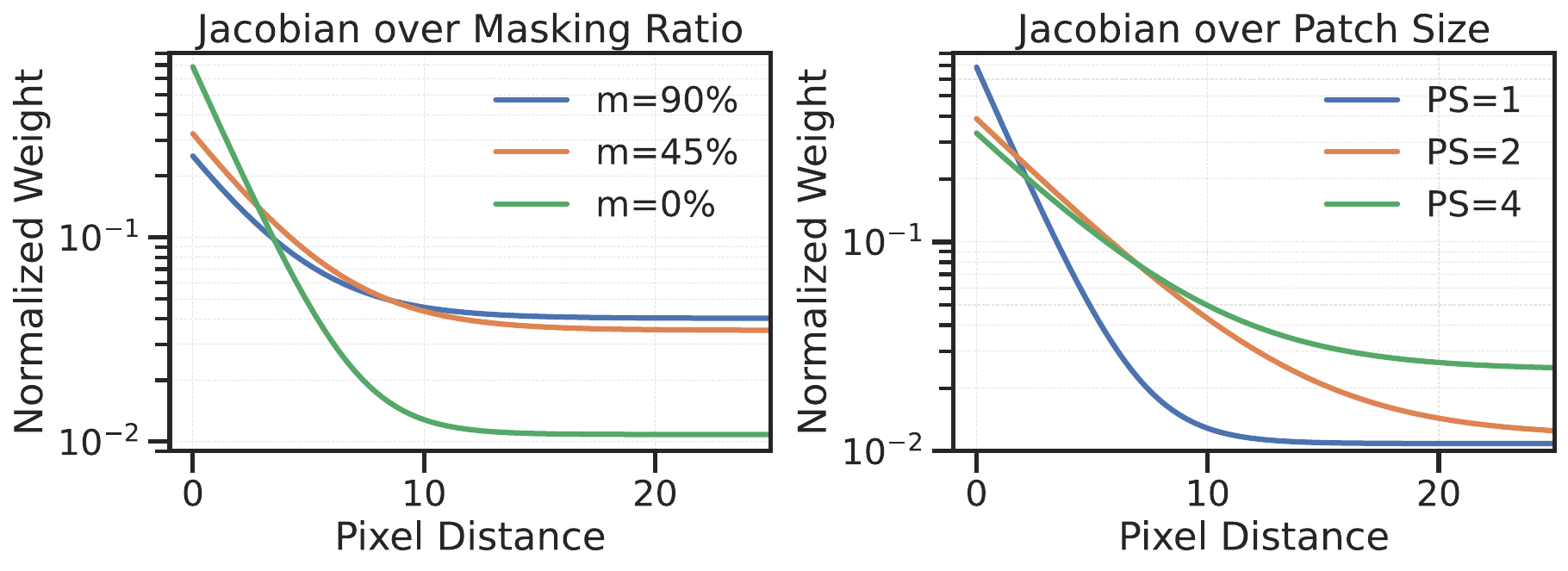}
    \caption{
    An exponential fit to the magnitude of the entries of the input-output Jacobian as a function of distance from the output pixel from the input pixel of different models (averaged over input and output pixels) for different masking ratios and patch sizes. Experiments were conducted for a linear MAE with latent dimension 128 on CIFAR-10. As the masking ratio increases, more and more non-local information is used for reconstruction in a (linear) MAE. Similarly, the reconstruction relies more on nonlocal information as the patch size increases.}
    \label{fig:lin_mae_masking}
\end{figure}

Next, we investigate how MAE hyperparameters---masking ratio $m$ and patch size $p$ --- affect the types of learned features. Mathematically, we can see this influence from \cref{eq:A_opt}, where the term $V^{-1}X^\top X$ first performs a whitening transformation of the original data. As the masking ratio increases, this transformation effectively performs patch-wise whitening, making all intra-patch correlations identity and normalizing inter-patch correlations. Masking forces the model to incorporate non-local information beyond the immediate patch for reconstruction, as seen in ~\cref{fig:lin_mae_masking}. Our results reveal that a high masking ratio produces a more diffuse average Jacobian, whereas a low masking ratio yields a more localized one. We observe a similar relationship with patch size, aligning with our prediction in \cref{eq:A_opt}: larger patch sizes expand the integration of information from regions outside the patch. In both of these cases, increasing the masking ratio and patch size reduces the weight of intra-patch pixels.  

\subsection{What kinds of tasks benefit from MAEs compared to AEs?}
\begin{figure}[h]
    \centering
    \includegraphics[width=\linewidth]{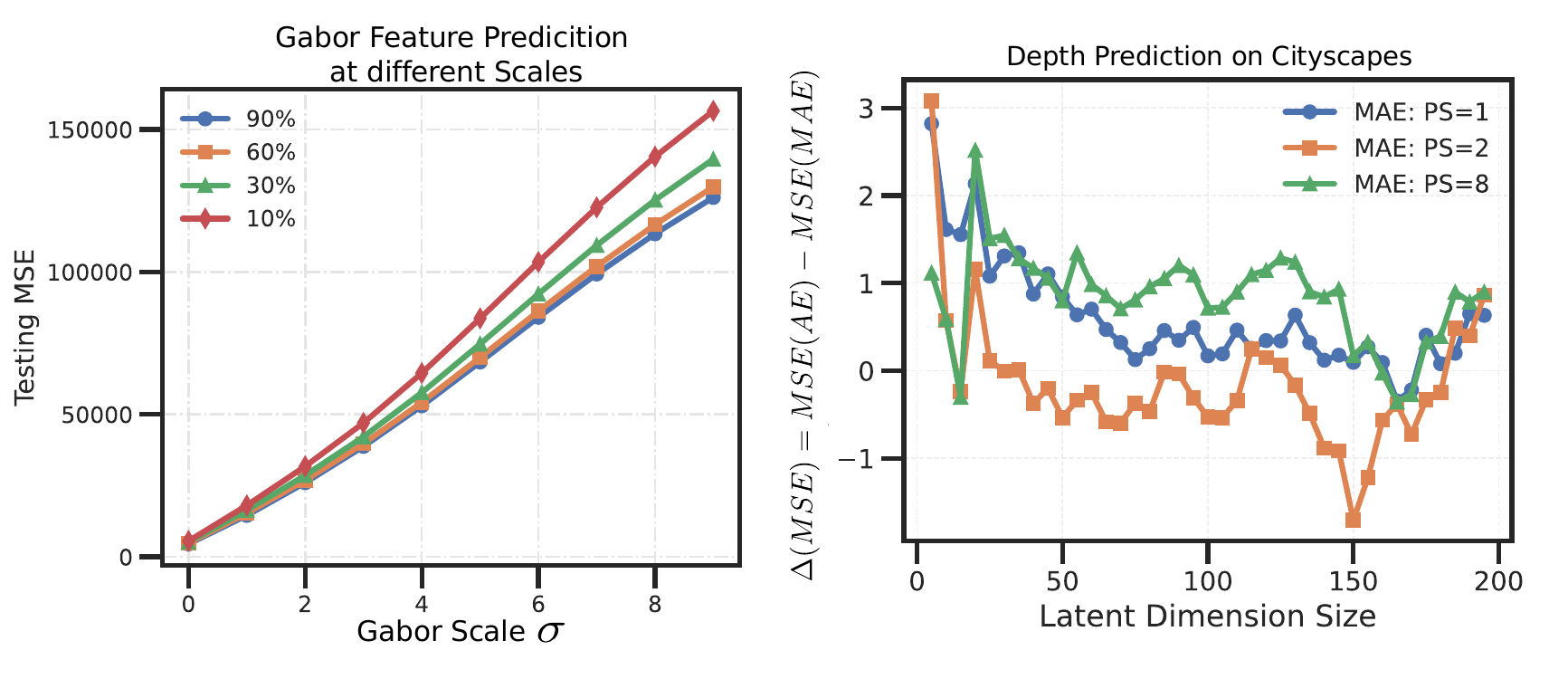}
    \caption{\textbf{Left:} Gabor feature prediction task using pretrained features from MAEs as a function of spatial scale ($\sigma$). In general, we see that a higher masking ratio, which forces the MAE to learn more non-local information, performs better on larger scales. \textbf{Right:} Reduction in MSE when an MAE is used for Cityscapes depth prediction as compared to an AE. For low-dimensional latents, MAEs outperform AEs on supervised depth prediction.}
    \label{fig:lin-gabor-task}
\end{figure}

Previously, we showed that MAEs learn different features than AEs and DAEs, as illustrated in \cref{fig:lin_mae_kernel}, e.g.,
MAEs capture more spatially distant information while reducing the emphasis on intra-patch details.
This raises a key question: what kinds of tasks benefit more from MAEs compared to AEs?
We investigate this through two tasks: a synthetic Gabor feature prediction task requiring integration of spatially distant information, and a monocular depth prediction task using the Cityscapes~\cite{Cordts2016TheCD} dataset.

\cref{fig:lin_mae_masking} suggests that MAEs capture more spatially distant information as the masking ratio and patch size increase. To investigate this, we evaluate them on predicting Gabor features, a task that allows precise control over the spatial dependencies required for accurate prediction. Gabor functions, denoted as $g(i,j)$ for spatial coordinates $i,j$, are localized harmonics modulated by a Gaussian window and are parameterized by frequency $f$, phase shift $\phi$, Gaussian scale $\sigma$, and dilation $\gamma$:
\begin{equation}
g(i,j) = \exp\left( -\frac{i^2 + \gamma^2 j^2}{2\sigma^2} \right) \cos\left( 2\pi i f + \phi \right).
\end{equation}
Our target, $x_g(i, j)$, is the convolution of the Gabor function over the entire image
\(
    x_g(i,j) = (x * g)(i,j).
\)
The key advantage of using a Gabor feature prediction task is that we can adjust the parameter  $\sigma$ to control whether the task depends on local or long-range spatial interactions. This design enables a systematic evaluation of whether MAEs trained with higher masking ratios are better at capturing long-range dependencies. To analyze this effect, we trained an MAE with patch size ($p=2$) and frequency values $f=[0.03, 0.06, 0.1]$, with fixed $\phi=0$ and $\gamma=1$, across various masking ratios (\cref{fig:lin-gabor-task}).  For small $\sigma$ values, different methods perform similarly. However, as $\sigma$ increases, MAEs trained with high masking ratios outperform those with lower masking ratios. This aligns with our findings in \cref{fig:lin_mae_kernel}, where MAEs integrate information from distant spatial patches.  

While the Gabor prediction task provides a controlled setting to analyze MAEs, real-world perception tasks present greater complexity. To assess how MAEs perform in practical scenarios, we evaluate them on monocular depth prediction---a task that requires integrating spatial information to infer depth accurately. We trained MAEs on the Cityscapes dataset, downsampled to 32×32 resolution, and analyzed performance as a function of latent space dimensionality. For small latent dimensionalities, all patch sizes ($m=0.8$) outperform an AE. This finding highlights how MAEs, through masking-based regularization, are useful for biasing representations for downstream perception tasks. 

Linear MAEs are amenable to analysis but they come with their own limitations.
\begin{enumerate*}[(i)]
    \item On datasets like CIFAR, ViTs use over-complete representations of the data, while our analysis is restricted to rank-deficient or full-rank encoders and decoders.
    \item  Linear MAEs only capture 2nd-order correlations of the data. Higher order correlations in the data distribution can be used for better reconstruction, and possibly provide a better prior for downstream tasks.
    \item The network architecture for linear MAEs is limited to fully connected networks, which provides limited insight into designing ViTs for MAEs.
\end{enumerate*}

\begin{summary}
  \begin{enumerate}[noitemsep,topsep=0pt,leftmargin=*]
    \item Linear MAEs learn a reconstruction kernel that biases feature learning towards spatially distant and correlated parts of the input by discouraging the use of intra-patch level information through the regularizer $||GAB||^2$.
    \item  Increasing the masking ratio or patch size forces MAE features to retain spatially distant information. 
    \item MAEs with high masking ratios perform better at tasks that require incorporating inter-patch level information. 
  \end{enumerate}
\end{summary}

\section{Understanding Nonlinear MAEs}
The objective for a non-linear MAE can be written as
\begin{equation}\label{eq:nolinmae_objective}
    \ell_m(\theta) = \mathbb{E}_{R} \left[\left\| X - f_\theta(g_\theta(R \odot X)) \right\|^2\right],
\end{equation}
where $g_{\theta}$ is the encoder and $f_{\theta}$ is the decoder, and we define: $f_\theta(g_\theta(R\odot X))=h_{\theta}(R\odot X) $.
For MAEs, the encoder and decoder are ViT blocks, where the encoder processes only the unmasked patches, and the decoder processes all patches.
We could extend our analysis to non-linear MAEs using a linear approximation of MAEs (see ~\cref{app:non_linear_mae}). However, these approximations give us little insight into how we should train MAEs using deep networks.
  
To analyze the encoder and decoder, we adopt the technique of \citet{Kadkhodaie2023GeneralizationID}, which draws a connection between diffusion models and denoising. We can draw a similar parallel between MAEs and denoisers, except that the denoising task is set up using masking instead of corruption by additive Gaussian noise. \citet{Kadkhodaie2023GeneralizationID} argue that diffusion models learn an adaptive low-dimensional basis from data and we can identify this basis using techniques from~\citet{Mohan2019RobustAI}.
Extracting this basis requires the network to be a first-order homogeneous function, which ViTs are not---operations like layer-norm and attention are not homogeneous. To understand the basis learned by MAEs trained using ViTs, we approximate our denoiser $h_\theta(R\odot x)$ via a first-order Taylor series expansion around an input $\tilde{x} = R \odot x$, denoted by $x_0$. The approximation can be written as:
\begin{equation}
\label{eq:taylor_jacob}
h(\tilde{x}) \approx h(x_0) + \nabla_{\tilde{x}} h(\tilde{x})(\tilde{x} - x_0) = A (\tilde{x}) + b,
\end{equation}
where $ A = \nabla_{\tilde{x}} h(\tilde{x}) $ is the Jacobian of the reconstruction with respect to the inputs and $ b = f(x_0) - \nabla_{\tilde{x}} h(\tilde{x}) x_0 $ is the bias term.
Assuming that the Jacobian does not change locally~\citep{Balestriero2018AST}, $\nabla_{\tilde{x}}h(\tilde{x})=\nabla_{x_0}h(x_0)$. We use this technique as a heuristic to analyze a nonlinear MAE, and thus interpret the encoder and decoder as learning an adaptive basis matrix $A = \nabla_{\tilde{x}} h(\tilde{x})$ for each data sample.  

\begin{figure}
    \centering
    \includegraphics[width=\linewidth]{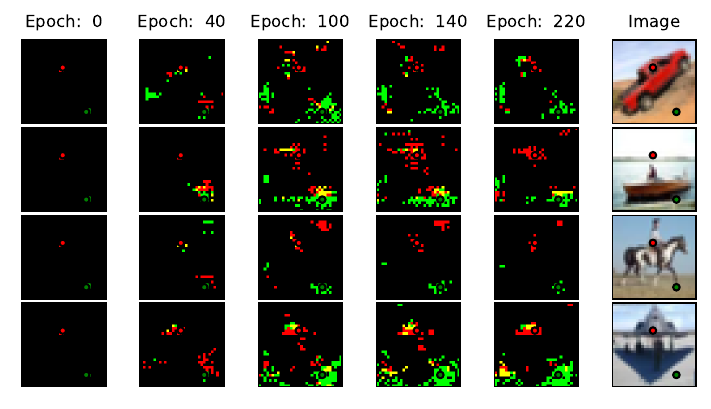}
    \caption{Visualization of the Jacobian of the output with respect to CIFAR-10 images, showing which input pixels contribute to reconstructing a given output pixel. We track this throughout training for the green and red pixel locations, showing their respective reconstruction kernels in corresponding colors. Each row presents a different image. In general, the Jacobian evolves throughout training, transitioning from highly localized to broader spatial dependencies, and adapts dynamically to each sample. The network is trained with a 80\% masking ratio and a patch size of 2. }
    \label{fig:vit-recon-kernel}
    \vspace{-3mm}
\end{figure}

Fig.~\ref{fig:vit-recon-kernel} illustrates the evolution of the learned basis throughout training for a MAE-trained ViT with a masking ratio of 80\% and a patch size of 2. The figure highlights two different output pixels, circled in red and green, with corresponding colored input features identified by the Jacobian. First, unlike the linear MAEs discussed earlier, the basis used for reconstruction is adaptive and varies across images rather than being fixed. Second, during training, the basis transitions from being highly localized at the start to more diffuse, incorporating information from distant regions as training progresses. Third, while the linear model relies primarily on second-order correlations, ViTs leverage higher-order correlations to improve the performance on masked reconstruction. For instance, in the horse image, the reconstruction kernel adapts to capture correlations with the human’s shirt, while for background pixels, it dynamically adjusts across similar regions. This higher-order correlation modeling mitigates the overly smoothed reconstructions typical of purely second-order models, demonstrating that our model leverages complex statistical dependencies.

\begin{figure}
    \centering
    \includegraphics[width=\linewidth]{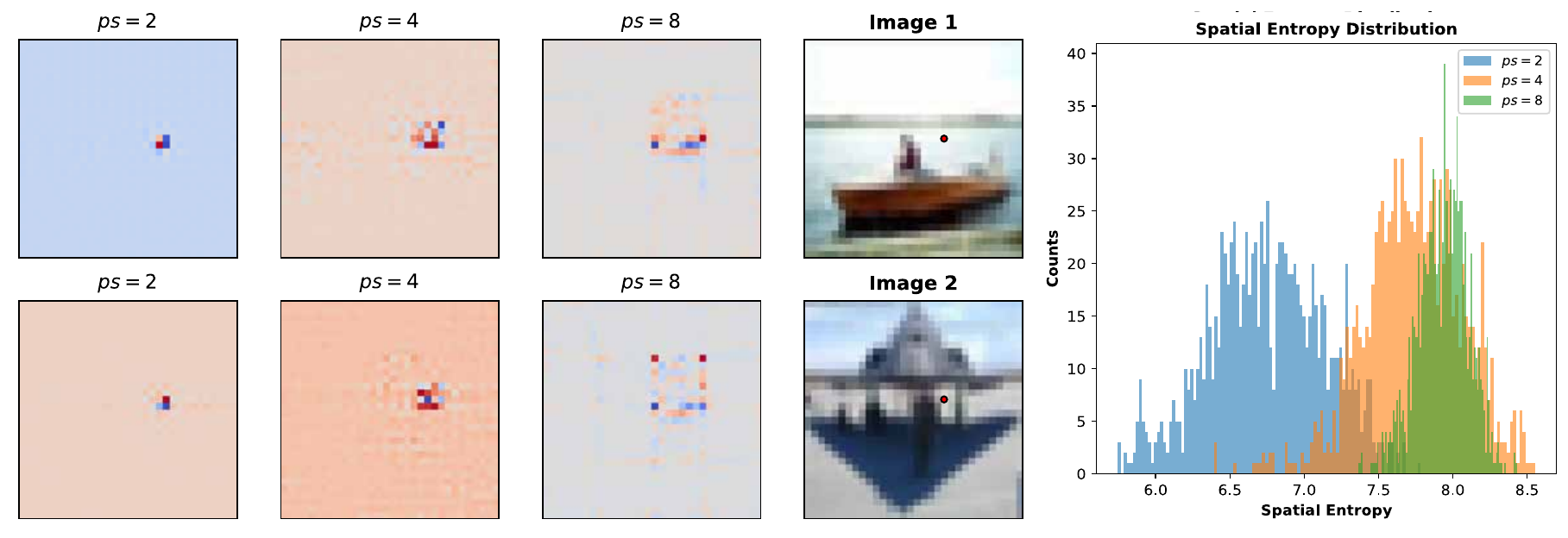}
    \caption{\textbf{Left:} Visualization of the Jacobian of the output with respect to CIFAR-10 images as we vary patch size (masking ratio fixed  at $m=0.6$) for the image in each row, for the denoted output pixel in red. As we increase the patch size, the weights have a less localized distribution. \textbf{Right:} Spatial Entropy distribution over all the output pixels for different patch sizes. In general, larger patch sizes result in distributions with higher spatial entropy. }
    \label{fig:no-lin-ps}
    \vspace{-5mm}
\end{figure}

Next, we examine the influence of masking ratio and patch size on the Jacobian matrix $A$. In \cref{fig:no-lin-ps}, we demonstrate that increasing the patch size shifts the reconstruction from relying on local information to incorporating more global context. We characterize this using the spatial entropy for each output pixel. Given that the Jacobian of a single output pixel is denoted as $ J \in \mathbb{R}^{HW} $, the spatial entropy is computed as:  
\begin{equation}
    S = -\sum_i \frac{|J_i|}{\|J\|_1} \log \left( \frac{|J_i|}{\|J\|_1} \right),
\end{equation}
where $i$ is the ith component of $J$.
 We then construct a histogram of $ S $ for all output pixels.
 Generally, larger patch sizes result in higher spatial entropy, indicating broader use of information compared to smaller patches, while changes in masking ratio had a much weaker influence on distant spatial information. Beyond CIFAR, we also show similar results on ImageNet-64 in \cref{fig:imgnet-jacob}.

\begin{summary}
  \begin{enumerate}
    \item Nonlinear MAEs develop an adaptive kernel for denoising that is specific to the test datum compared to the dataset-specific kernel of a linear MAE.
    \item Non-linear MAEs grow over the course of training from a well-localized kernel at the start to a kernel with a broader spatial support at the end of training.
    \item Similar to the linear MAE, increasing patch size increases the spatial extent of encoded features. 
\end{enumerate}
\end{summary}
\section{From theory to practice: How to train your MAE}
Masked autoencoders require significantly more training epochs than other self-supervised learning methods to achieve competitive downstream performance. For example, on ImageNet, MAEs require 1,600 training epochs, whereas I-JEPA~\cite{Assran2023SelfSupervisedLF} achieves a similar task performance in just 300 epochs. Both pretraining and fine-tuning can be sensitive to the choice of hyperparameters, particularly with ViTs~\citep{steiner2021train}.
In this section, we discuss some empirical insights for MAE pretraining and fine-tuning based on our experiments on CIFAR-10.

\begin{figure}[!t]
    \centering
    \includegraphics[width=0.45\linewidth]{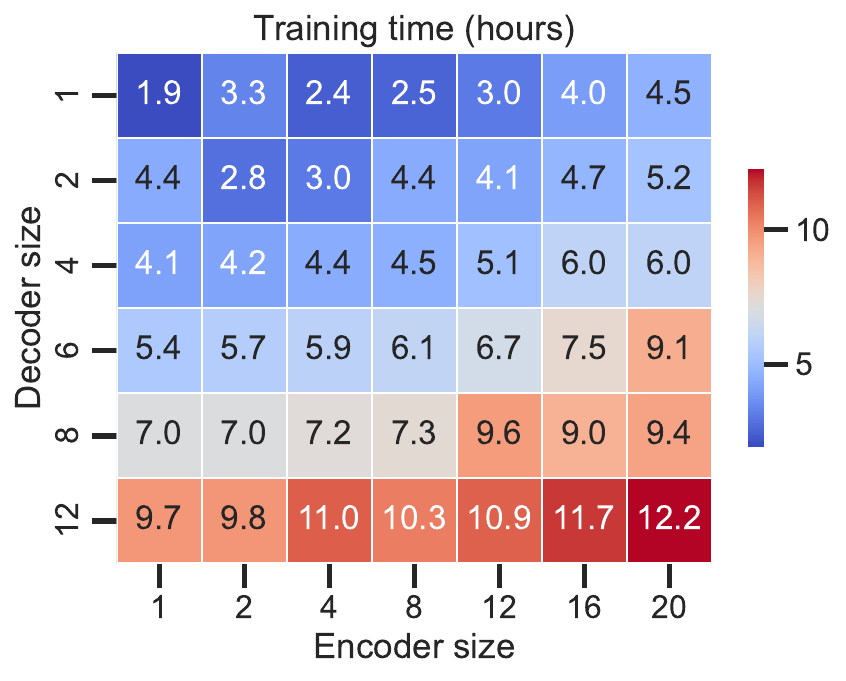}
    \includegraphics[width=0.45\linewidth]{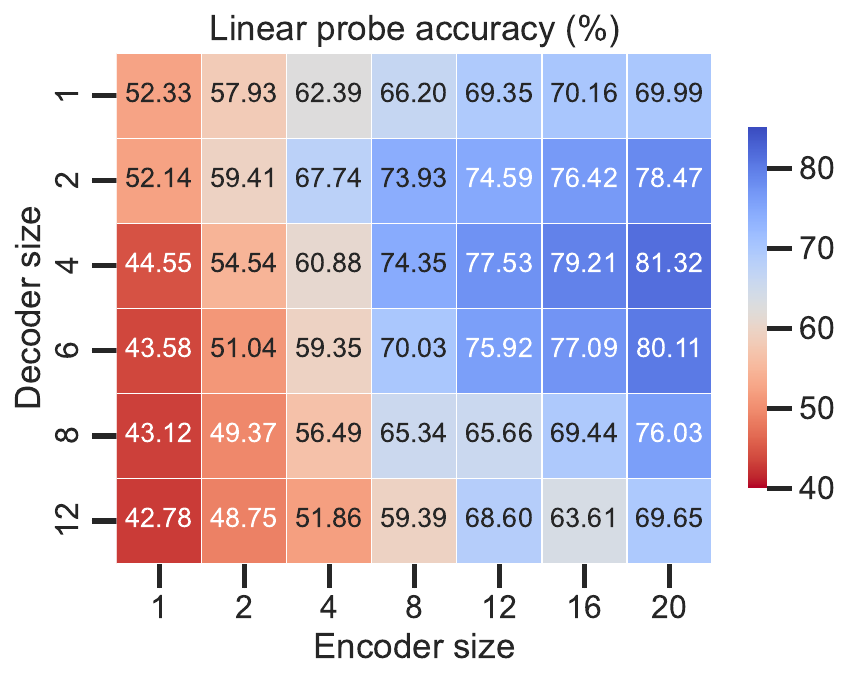}
    \caption{
        We plot (left) the training time and (right) linear probe accuracy for different combinations of the number of encoder and decoder layers.
        \textbf{MAEs are slow to train} with training time growing faster as we increase the size of the decoder.
        We find that training loss is not a good proxy for downstream classification accuracy~\cref{fig:decoder_size_loss_time}.
        The \textbf{accuracy of the trained encoder continues to improve as we increase its size}. However, this is not true for the decoder: the optimal decoder contains 2-4 layers. We notice that the differences in accuracies reduce when we fine-tune the encoder for a 100 epochs~\cref{fig:decoder_size_acc}
        }
    \label{fig:decoder_time}
    \vspace{-3mm}
\end{figure}
\subsection{Experimental details}
We train MAEs using the architecture in~\citet{He2021MAE}.
We fix the embedding dimension $d$ at 192, the number of attention heads in the encoder to 12 and the decoder to 16 in all our experiments.
The Transformer blocks use the GeLU non-linearity~\citep{hendrycks2016gaussian} with the layer-norm before the attention and MLP blocks ---  also known as the pre layer-normalization transformer~\citep{xiong2020layer}.

The MAE is trained for 2000 epochs using the AdamW optimizer with 50 epochs of warmup and a cosine-annealed learning rate schedule with an initial learning rate of $1.2 \times 10^{-3}$.
The MAEs are trained with a batch-size of 4096 and weight-decay of 0.05.
After pretraining, we discard the decoder and fine-tune the encoder.
We fine-tune for only 100 epochs using a batch-size of 1024 with 5 epochs of warmup and an initial learning rate of $10^{-3}$ annealed to $10^{-5}$ using a cosine-annealed schedule.
For MAE pretraining, we use random-resize crop and horizontal flips as the two augmentations, while for fine-tuning we use RandAugment~\citep{cubuk2020randaugment}.

We evaluate the accuracy on the downstream task using a linear-probe applied to the encoder. We find that trends obtained using a linear probe are nearly identical to trends obtained using fine-tuning (see~\cref{app:expts}).

\begin{figure}[!t]
    \centering
    \includegraphics[width=0.49\linewidth]{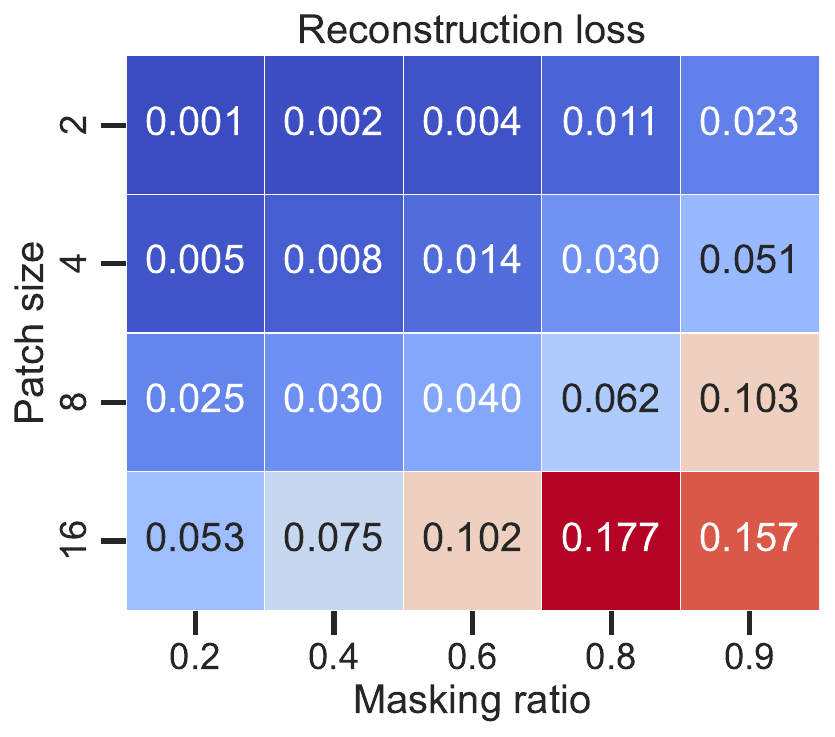}
    \includegraphics[width=0.49\linewidth]{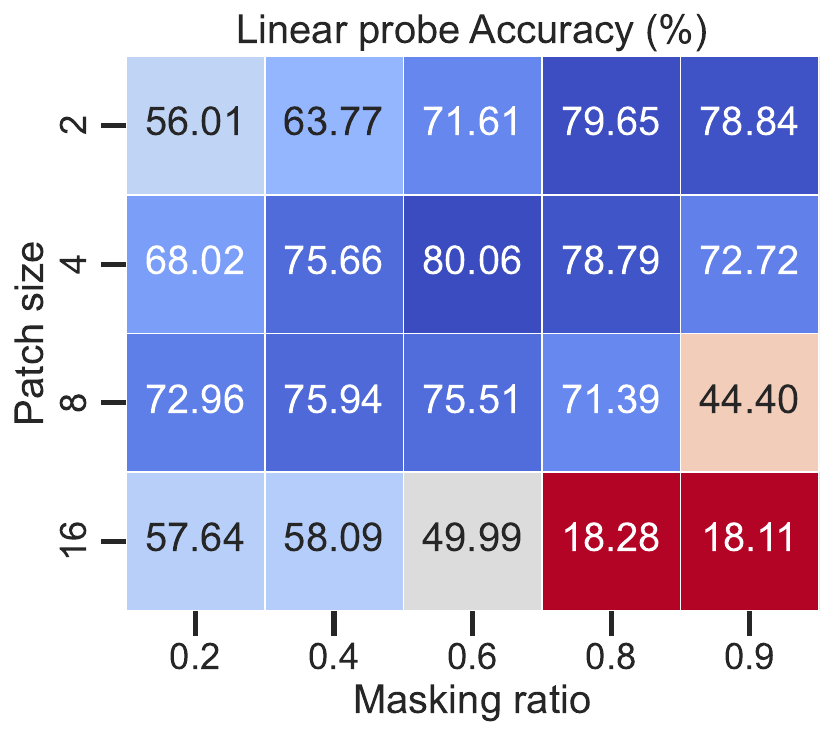}
    \caption{
        We plot the reconstruction error (left) and the linear probe accuracy (right) for different values of patch-size and masking ratio.
        Reconstruction loss decreases as we decrease the patch-size or decrease the masking ratio. However the linear probe accuracy has no clear trend with respect to the masking ratio and the optimal masking ratio depends on the patch-size. While smaller values of patch-size achieve better linear probe accuracies, they are also slower to train.
    }
    \label{fig:masking_patch}
    \vspace{-7mm}
\end{figure}
\subsection{Results} 
\textbf{How should we select the size of the encoder and decoder?}
Masked autoencoders typically have more encoder layers than decoder layers, but is this optimal? We investigate how the number of encoder and decoder layers affect the reconstruction error, training time and downstream classification accuracy.
In~\cref{fig:decoder_time}, we train models on CIFAR-10 and plot the training time (in hours) as we vary the number of encoder and decoder layers.
The training time increases at a slower rate when we increase the number of encoder layers compared to increasing the number of decoder layers since the decoder operates on a longer sequence of tokens --- the decoder uses both masked and unmasked tokens while the encoder only uses unmasked tokens. 
Even for CIFAR-10, training can take as long as 6 to 10 hours on a single GPU.

Increasing the number of encoder and decoder layers decreases the reconstruction error (\cref{fig:decoder_size_loss_time}). 
However, a smaller reconstruction error does not imply higher linear-probe accuracies (\cref{fig:decoder_size_loss_time}) or fine-tuning accuracies (\cref{fig:decoder_size_acc}).
\textbf{We find that the linear-probe accuracy continues to improve as we increase the size of the encoder.
However, the optimal decoder has far fewer layers and even a 1-layer decoder has high accuracy after fine-tuning with the added benefit of reduced training time.}
For example, a model pretrained with a 1-layer decoder achieves 95.26\% accuracy after fine-tuning, which is only 0.30\% worse than our best model~\cref{fig:decoder_size_acc}.
\begin{figure}[h]
    \centering
    \includegraphics[width=0.47\linewidth]{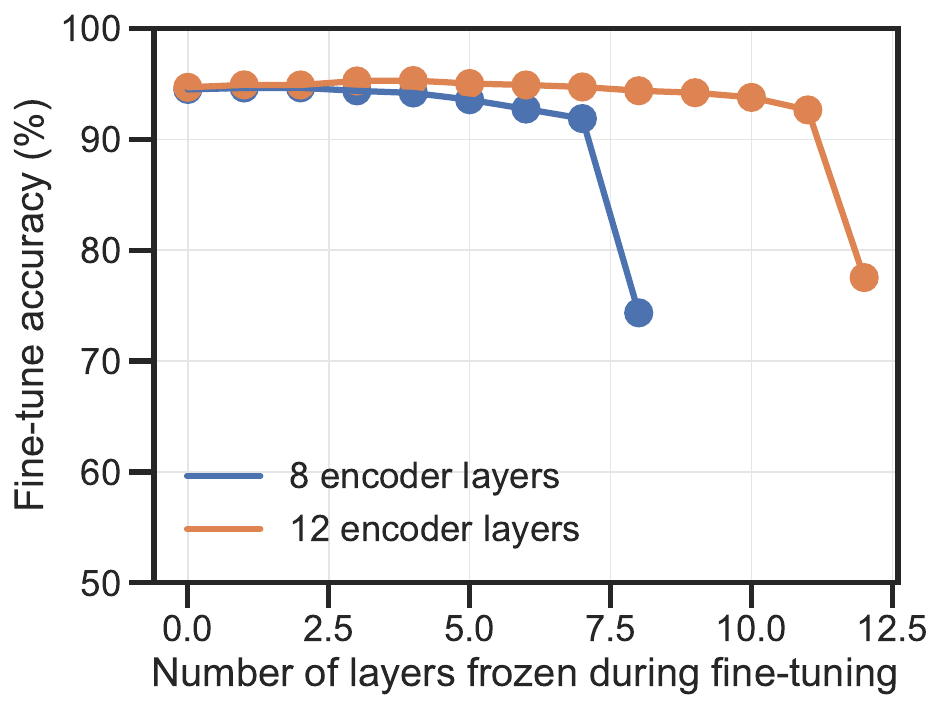}
    \includegraphics[width=0.47\linewidth]{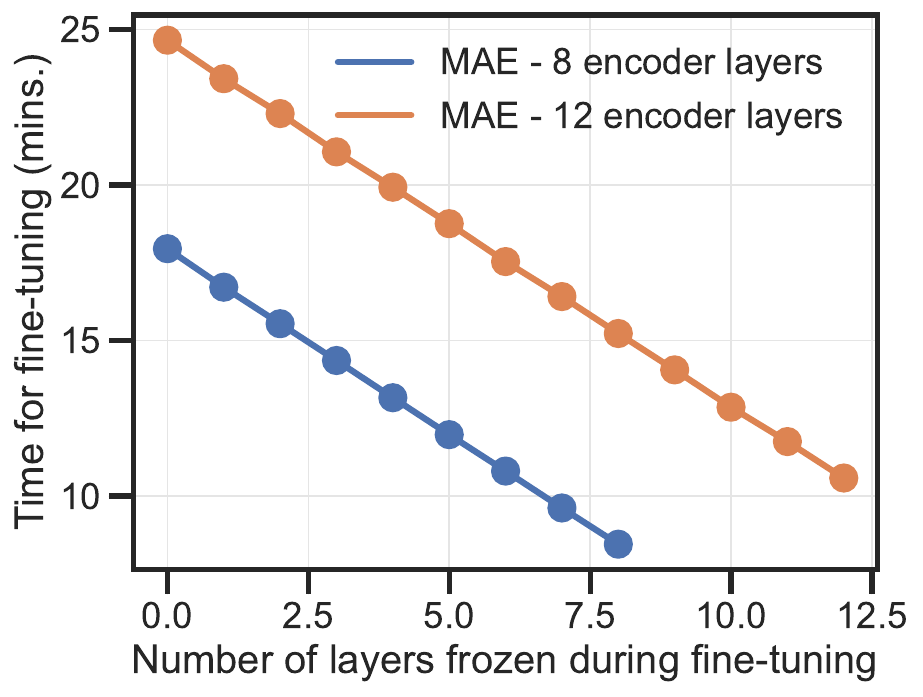}
    \caption{
        We fine-tune the model for 100 epochs after freezing the first $k$ layers of the network and plot the accuracy against number of layers frozen during fine-tuning. We find that even if we freeze all but 1 Transformer block, we recover the accuracy compared to when no layers are frozen, i.e., the accuracy can be recovered while using a fraction of the training time and memory.
    }
    \label{fig:decoder_blocks}
\end{figure}

\textbf{How do we select the masking ratio and patch-size?}
The masking ratio and patch-size control the basis learned by a linear MAE (\cref{s:linear_mae}). 
We see that these two hyper-parameters also have a significant impact on MAEs trained using deep networks.
In~\cref{fig:masking_patch}, we plot the reconstruction error and the linear-probe accuracy for different values of masking ratio and patch-size.
\textbf{MAEs train faster if the patch-size is large or if the masking ratio is large} since a larger patch-size reduces the number of tokens by a quadratic factor and increasing the masking ratio decreases the number of tokens fed to the encoder. The training time increases from 4 hours to 10 hours as we decrease the masking ratio from 0.9 to 0.1.

Reconstruction improves as we decrease the masking ratio which is unsurprising --- it is easier to reconstruct an image if we have more patches.
Reconstruction error also reduces as the patch-size becomes smaller.
A smaller patch-size reduces the average spatial distance to unmasked patches, which makes reconstruction easier due to strong local spatial correlations seen in images (\cref{fig:local_correlations}).
We find that a small patch-size and large masking ratio achieve the best linear probe accuracies. While smaller values of patch-size achieve better linear probe accuracies, an MAE with larger patch-size trains faster.

\textbf{Can we fine-tune fewer layers to reduce GPU memory and training time?}
We investigate the effect of fine-tuning a subset of the layers of an MAE in~\cref{fig:decoder_blocks}.
We find that \textbf{if we freeze all but one Transformer block, we achieve within 2\% of the accuracy achieved from fine-tuning all the layers}, i.e.,  a similar accuracy can be achieved with a nearly 2x speedup (\cref{fig:decoder_size_acc} (right)) and with less GPU memory. These results align with the practice of setting learning rates that exponentially decrease across the different layers of the encoder during fine-tuning~\citep{He2021MAE}.

\begin{summary}
  \begin{enumerate}
    \item MAEs achieve higher linear probe accuracy with more encoder layers and fewer decoder layers. On CIFAR-10, accuracy improves consistently with more layers, and a single-layer decoder reaches within 1\% of the best fine-tuning accuracy while training 4× faster. 
    \item As the patch-size used to train the MAE increases, the masking ratio that achieves the highest linear-probe accuracy decreases.
    \item On CIFAR-10, fine-tuning just the last transformer block is enough to achieve within 2\% of the best accuracy. 
  \end{enumerate}
\end{summary}


\section{Related Work}

Theoretical work on understanding MAEs can be categorized into the following themes: architectural modifications, data modeling, connections to other self-supervised learning objectives and denoising. 
Previous works that study architectural variants find that masking alters the attention structure of ViTs, creating more local attention in early layers rather than global~\cite{Cao2022HowTU, Shekhar2022UnderstandingCV, xie_revealing_2023, park_what_2023, huang_how_2024}. \citet{Raghu2021DoVT} suggests that ViTs with local receptive fields generalize better to vision tasks. \citet{Kong2023UnderstandingMA} hypothesizes that this learning process attempts to learn a hierarchical latent variable model, where masking enforces identifiability of coarser latent variables, though such a model is not definitively known to exist for natural image data.
Another perspective interprets MAEs as a form of contrastive learning using only positive samples of unmasked patches~\cite{Zhang2022HowMM}. \citet{littwin2024jepa} challenges this hypothesis by demonstrating that MAE approaches learn a different set of features than contrastive methods.
Beyond explaining the architectural bias induced by MAEs, researchers have also investigated how MAEs can perform masked image modeling tasks under extreme masking ratios. This challenge can be framed as a linear inverse problem, where measurements of unmasked pixels are used to recover masked pixels~\cite{Kadkhodaie2021StochasticSF}. While compressive sensing offers one method to solve this inverse problem, it lacks the nonlinear encoder and prior over the data distribution present in MAEs~\cite{Wright2022HighDimensionalDA}. For the nonlinear problem, the field of image denoising provides a probabilistic understanding of the unique properties induced in this denoising task~\cite{BioucasDias2009MultiplicativeNR, Milanfar2024DenoisingAP}. For additional related work see~\cref{s:app_related}
\section{Conclusion}

This work sheds light on how MAEs build strong foundational features. We showed that MAEs capture spatial correlations in input images. We model MAEs in a linear setting to demonstrate how masking ratio and patch size affect the integration of spatially distant features. We extend this study to nonlinear MAEs by analyzing the input-output Jacobian and provide practical training guidelines for practitioners.

While MAEs are adaptable, the sensitivity of their hyperparameters to dataset and task characteristics necessitates careful tuning. Understanding the connection between properties of the visual task and optimal MAE hyperparameters remains an important direction for future work. Overall, our characterization of MAEs as spatial correlation learners provides valuable insights for advancing their theoretical understanding.
{
    \small
    \bibliographystyle{ieeenat_fullname}
    \bibliography{main}
}

\clearpage
\appendix
\maketitlesupplementary

\section{Linear MAEs: Characterizing critical points}
We present additional details for~\cref{thm:mae_minima} below. 

Consider the loss. 
\begin{equation}
    \ell = ||X - (1-m) XAB||^2 + m(1-m) || G AB||^2,
\end{equation}
We first set $\partial \ell/\partial A$ to 0 to get
\begin{multline}
-2(1-m) X^\top  X B^\top   + 2(1-m)^2 X^\top  XABB^\top  + \\ 2m(1-m) \mathrm{Blkdiag}_p(X^\top  X) ABB^\top  = 0. 
\end{multline}
Any critical point must satisfy
\begin{equation}
    A^* = V^{-1} X^\top  X B^\top  (BB^\top )^{-1}
\end{equation}
where $V = (1-m)X^\top  X + m \mathrm{Blkdiag}_p(X^\top  X)$. Substituting this value of $A^*$ back into the loss, we get
\begin{align*}
\ell = & \Tr(X^\top  X) - \\
&\quad (1-m)\Tr[ B (X^\top  X  V^{-1} X^\top  X) B^\top  (BB^\top )^{-1} ]
\end{align*}
Let $C = X^\top  X  V^{-1} X^\top  X$ and $D=I$. 
Note that $C$ and $D$ are both symmetric and $D$ is invertible.
Using lemma~\ref{lemma:maximum}, the expression is minimized by the k largest eigenvalues of the generalized eigenvalue problem defined on $(C, D)$. Furthermore, every critical point is a subset of $k$ eigenvectors (from Lemma~\ref{lemma:generalized_eigenvalue}).

\section{Non-linear MAEs using linear approximations}\label{app:non_linear_mae}

\paragraph{Non-linear masked autoencoders under a Taylor series approximation.}
Consider a nonlinear autoencoder $f$ and the corresponding masked autoencoder loss
\[ \ell_m = \mathbb{E}_R || X - f(R \odot X) ||^2.\]
Let $X_{\mu} = (1-m)X$ and $X_r = R \odot X$. 
Under the Taylor series approximation around $0$
\[ f(X_R) \approx f(0) + X_R \nabla f(0)^\top   = X_R \nabla f(0)^\top   ,  \]
the MAE loss reduces to
\begin{align*}
    \ell_m  = ||X - (1-m) X \nabla f(0)^\top ||^2  + m(1-m) || G \nabla f(0)^\top  ||^2 
\end{align*}
Note that this approximation holds for small perturbations to the input, and is less likely to hold for large perturbations, i.e., the approximation is only valid for small masking ratio.

We will consider another approximation, but this time calculate the loss for a single sample. We consider a first-order approximation of $f$ for a single sample.
\[ f(x_R) \approx f(x_{\mu}) +  \nabla f(x_{\mu}) (x_R - x_\mu) ,  \]
which when substituted into the above loss gives us
\begin{align*}
    \ell_m(x)  = &||x - f(x_{\mu})||^2   \\
    &\qquad +m(1-m)  \Tr\left(F_x \mathrm{Blkdiag} (x^\top  x)   \right)
\end{align*}
Note that this is the loss for a single sample and $F_x$ is the Fisher information matrix for data point $x$. 

\paragraph{Masked autoencoders: A function space perspective}
Let us assume that we have access to the true input image signal $x(i, j)$, where $i, j \in [0, 1]$, as opposed to a discretized version of it. The masked autoencoder objective can be posed as an optimization problem over functionals $f \in \mathcal{F}$, i.e.,
\[ \ell_m = \int || f(r \odot x) - x||^2 \ \mathrm{d} r, \]
where $r$ is a mask applied to the image.
Assuming that $f$ is linear, then the above objective reduces to
\[ \ell_m =  ||x - f(\mu)||^2 - ||f(\mu)||^2 + \int || f(r \odot x)||^2 \ \mathrm{d}r, \]
where $\mu = \int (r \odot x) \ \mathrm{d}r$.
In the linear case, the MAE forces $f$ to reconstruct the mean masked image, while minimizing the variance of the predictions made on the masked images.

\section{Supporting lemmas}

\begin{lemma}\label{lemma:mae_expectation}
    The loss of the masked autoencoder is \[ \ell_m = ||X - (1-m)XAB||^2 + m(1-m) ||GAB||^2 \]
\end{lemma}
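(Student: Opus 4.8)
The plan is to expand the expectation over the random mask $R$ in $\ell_m(A,B) = \mathbb{E}_R[\|X - (R\odot X)AB\|^2]$ by linearity of expectation, treating $R\odot X$ as the only random quantity. Writing $\|X - (R\odot X)AB\|^2 = \|X\|^2 - 2\Tr(X^\top (R\odot X)AB) + \Tr(B^\top A^\top (R\odot X)^\top (R\odot X)AB)$, I would push the expectation inside each of the three terms. The first term is deterministic. For the second term I need $\mathbb{E}_R[R\odot X]$, and for the third term I need $\mathbb{E}_R[(R\odot X)^\top (R\odot X)]$, i.e. the second moment of the masked data matrix. Everything reduces to computing these two moments entrywise.

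The first moment is immediate: since each patch is masked independently with probability $m$, each entry $R_{ki}$ is Bernoulli$(1-m)$, so $\mathbb{E}[R\odot X] = (1-m)X$, which gives the cross term $-2(1-m)\Tr(X^\top XAB)$. The second moment is the crux. The $(i,j)$ entry of $(R\odot X)^\top(R\odot X)$ is $\sum_k R_{ki}R_{kj}X_{ki}X_{kj}$, so I need $\mathbb{E}[R_{ki}R_{kj}]$. Here the patch structure matters: if $i$ and $j$ lie in the same patch of size $p$ then $R_{ki}=R_{kj}$, so $\mathbb{E}[R_{ki}R_{kj}] = \mathbb{E}[R_{ki}] = 1-m$; if they lie in different patches the masks are independent, so $\mathbb{E}[R_{ki}R_{kj}] = (1-m)^2$. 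Therefore $\mathbb{E}[(R\odot X)^\top(R\odot X)] = (1-m)^2 X^\top X + \big((1-m)-(1-m)^2\big)\,\blkdiag_p(X^\top X) = (1-m)^2 X^\top X + m(1-m)\blkdiag_p(X^\top X)$, using that the within-patch blocks of $X^\top X$ and of $\blkdiag_p(X^\top X)$ coincide. I expect this bookkeeping — correctly identifying which pairs of coordinates share a patch and getting the Bernoulli second-moment identity $\mathbb{E}[R_{ki}^2]=\mathbb{E}[R_{ki}]=1-m$ right — to be the main obstacle, though it is more careful accounting than deep difficulty.

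Assembling the pieces, $\mathbb{E}_R\|X-(R\odot X)AB\|^2 = \|X\|^2 - 2(1-m)\Tr(X^\top XAB) + (1-m)^2\Tr(B^\top A^\top X^\top XAB) + m(1-m)\Tr(B^\top A^\top\blkdiag_p(X^\top X)AB)$. The first three terms are exactly the expansion of $\|X - (1-m)XAB\|^2$, and the last term is $m(1-m)\Tr(B^\top A^\top G^\top G AB) = m(1-m)\|GAB\|^2$ by the defining property $G^\top G = \blkdiag_p(X^\top X)$. This yields $\ell_m = \|X-(1-m)XAB\|^2 + m(1-m)\|GAB\|^2$, as claimed. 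A final remark I would include: $G$ exists because $\blkdiag_p(X^\top X)$ is positive semidefinite (it is a principal submatrix pattern of the Gram matrix $X^\top X$), so it admits a symmetric square root or Cholesky-type factor $G$.
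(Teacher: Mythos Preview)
Your proposal is correct and follows essentially the same route as the paper: expand the squared Frobenius norm, compute $\mathbb{E}[R\odot X]=(1-m)X$ and $\mathbb{E}[(R\odot X)^\top(R\odot X)]=(1-m)^2X^\top X + m(1-m)\blkdiag_p(X^\top X)$ by casing on whether $i,j$ share a patch, then regroup into the two displayed terms. Your added remark on the existence of $G$ via positive semidefiniteness of $\blkdiag_p(X^\top X)$ is a nice touch the paper omits.
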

\begin{proof}
Expanding the term inside the expectation, we get
\begin{align*}
    \mathbb{E}_{R} ||X - &(R \odot X) AB ||^2 \\
    &= \mathbb{E}_{R} \text{Tr } [ X^\top X  - 2X^\top  (R \odot X) AB  \\
    &\qquad\quad + B^\top A^\top  (R \odot X)^\top  (R \odot X) A B ].
\end{align*}
We note that $\mathbb{E}[R \odot X] = (1-m)X$ and 
\begin{align*}
    \mathbb{E}[&(R \odot X)^\top  (R \odot X) ] =  \\
    &\qquad\begin{cases}
        (1-m)X_i^\top  X_j   & X_i, X_j \ \text{in the same patch} \\
        (1-m)^2X_i^\top X_j & X_i, X_j \ \text{not in the same patch}. 
    \end{cases} \\
    &= (1-m)^2X^\top  X + m(1-m) \mathrm{Blkdiag}_p (X^\top  X).
\end{align*}
Substituting back into the masked autoencoder loss, we get
\begin{align*}
    &\mathbb{E}_{R} ||X - (R \odot X) AB ||^2 \\
    &= \text{Tr } [ X^\top  X - 2(1-m) X^\top  XAB + (1-m)^2 X^\top  X  \\
    &\qquad + m(1-m) B^\top  A^\top  \mathrm{Blkdiag}_p (X^\top  X) AB ] \\
    &= ||X - (1-m)XAB||^2 + m(1-m) ||GAB||^2
\end{align*}
where $G^\top  G = \mathrm{Blkdiag}_p(X^\top  X)$.
\end{proof}

\begin{lemma}\label{lemma:generalized_eigenvalue}
    For matrices $C \in \mathbb{R}^{d \times d}$, $X \in \mathbb{R}^{d \times k}$ and an invertible matrix $D \in \mathbb{R}^{d \times d}$, every critical point of 
    \[ L(X) = \Tr [(X^\top  D X)^{-1} X^\top  C X] \]
    that is full-rank can be expressed as $U Q$, where $Q$ is an invertible matrix and $U$ is any subset of $k$ eigenvectors of the generalized eigenvalue problem for $(C, D)$.
\end{lemma}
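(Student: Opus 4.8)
The plan is to locate the critical points by differentiating $L$, to recognize the resulting stationarity equation as an invariant-subspace condition for the pencil $(C,D)$, and then to invoke the spectral decomposition of the pencil to conclude that any full-rank critical point is a selection of $k$ generalized eigenvectors, modified by an arbitrary invertible change of basis on the right. Throughout I would use that $C$ and $D$ are symmetric (as noted in the appendix, and as holds in the application with $D=I$).

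First I would compute the gradient. Writing $P = X^\top D X$ and $S = X^\top C X$ so that $L(X) = \Tr[P^{-1}S]$, I would differentiate using $\mathrm{d}(P^{-1}) = -P^{-1}(\mathrm{d}P)P^{-1}$, with $\mathrm{d}P = (\mathrm{d}X)^\top D X + X^\top D\,\mathrm{d}X$ and $\mathrm{d}S = (\mathrm{d}X)^\top C X + X^\top C\,\mathrm{d}X$, and then collect terms via the cyclic property of the trace and symmetry of $C,D$ to obtain $\nabla_X L = 2\,(C X - D X\,P^{-1}S)\,P^{-1}$. Since $X$ is full rank, $P$ is invertible, so $\nabla_X L = 0$ is equivalent to $C X = D X\,P^{-1}S$. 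Setting $R := (X^\top D X)^{-1}(X^\top C X)\in\mathbb{R}^{k\times k}$, this reads $D^{-1}C\,X = X R$, i.e.\ the column space $\mathrm{col}(X)$ is a $k$-dimensional invariant subspace of $D^{-1}C$ (equivalently, of the pencil $(C,D)$), and $R$ is the matrix of the restriction in the basis given by the columns of $X$.

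Next I would reduce this to a statement about eigenvectors. When $D$ is symmetric positive definite (in particular $D=I$, the case used in the proof of \cref{thm:mae_minima}), $D^{-1}C$ is similar to the symmetric matrix $D^{-1/2}CD^{-1/2}$, hence is diagonalizable with real eigenvalues and a full set of $D$-orthogonal generalized eigenvectors. Any invariant subspace of a diagonalizable matrix is a direct sum of (pieces of) its eigenspaces; when the generalized eigenvalues are distinct — the generic case — this forces $\mathrm{col}(X) = \mathrm{col}(U)$ for some matrix $U$ whose $k$ columns are generalized eigenvectors of $(C,D)$, and when an eigenvalue is repeated one instead selects eigenvectors spanning the relevant eigenspaces. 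Finally, since $X$ and $U$ are both full rank $k$ with the same column space, there is a unique invertible $Q\in\mathbb{R}^{k\times k}$ with $X = UQ$, which is the claimed form; conversely every such $UQ$ satisfies $D^{-1}C(UQ) = (UQ)(Q^{-1}\Lambda Q)$ and hence is a critical point, so the characterization is complete.

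The main obstacle is the step that turns ``$\mathrm{col}(X)$ is invariant'' into ``$\mathrm{col}(X)$ is spanned by a subset of generalized eigenvectors'': this needs diagonalizability of the pencil and some care about eigenvalue multiplicities, and in the fully general non-symmetric-$D$ setting it is not automatic, so the clean statement really leans on $D$ being positive definite (true here, $D=I$) together with distinct generalized eigenvalues. A secondary source of friction is the gradient computation itself, which is easy to get wrong because $X$ appears twice in each of $P$ and $S$ and the transposes must be tracked carefully.
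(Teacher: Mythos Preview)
Your proof is correct and arrives at the same critical-point equation $CX = DX(X^\top DX)^{-1}(X^\top CX)$ as the paper, but you finish the argument along a different route. The paper constructs an explicit change of basis $\Phi$ that simultaneously diagonalizes $C$ and $D$ (via $\Phi_D$ and then the eigenvectors of $\Lambda_D^{-1/2}\Phi_D^\top C\Phi_D\Lambda_D^{-1/2}$), rewrites the stationarity condition in these coordinates as $P_{\tilde X}\Lambda_{\tilde C}\tilde X = \Lambda_{\tilde C}\tilde X$, and then observes that a projection commuting with a diagonal matrix with distinct entries must itself be diagonal with $k$ ones and $d-k$ zeros. You instead rewrite the equation as $D^{-1}C\,X = XR$, read it as ``$\mathrm{col}(X)$ is a $k$-dimensional invariant subspace of $D^{-1}C$,'' and use that $D^{-1}C$ is diagonalizable (being similar to $D^{-1/2}CD^{-1/2}$ when $D\succ 0$) so every invariant subspace is spanned by generalized eigenvectors. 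Your approach is more conceptual and avoids the coordinate-pushing; the paper's approach is more hands-on but makes the final commutation step very concrete. Both arguments implicitly assume $D$ is positive definite (not merely invertible), and both need a remark about repeated generalized eigenvalues --- you flag this explicitly, while the paper's ``$P_{\tilde X}$ must be diagonal to commute'' step likewise only literally follows when the diagonal entries of $\Lambda_{\tilde C}$ are distinct.
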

\begin{proof}
Taking the derivative of $L(X)$ with respect to $X$ and setting it to 0, we get
\[ 2DX (X^\top  D X)^{-1} X^\top  C X = 2 C X. \]
Let $(\Lambda_D, \Phi_D)$ be the eigenvectors and eigenvalues of $D$.
Let $(\Lambda_{\bar C}, \Phi_{\bar C})$, be the eigenvectors and eigenvalues $\Lambda_D^{-1/2} \Phi_D^\top  C \Phi_D \Lambda_D^{-1/2}$.
In addition, we define $\Phi = \Phi_{\bar C} \Lambda_D^{-1/2} \Phi_{\bar D}$ and $\tilde X = \Phi X$.
We choose this definition of $\Phi$, since it diagonalizes both $C$ and $D$.
\begin{align*}
             &2DX (X^\top  D X)^{-1} X^\top  C X = 2 C X \\
    \implies &D\Phi^\top  \tilde X (\tilde X^\top  \Phi D \Phi^\top  \tilde X)^{-1} \tilde X^\top  \Phi C \Phi^\top  \tilde X =  C \Phi^\top  \tilde X \\
    \implies &D\Phi^\top  \tilde X (\tilde X^\top  \tilde X)^{-1} \tilde X^\top  \Lambda_{\tilde C} \tilde X =  C \Phi^\top  \tilde X \\
    \implies &\Phi D\Phi^\top  \tilde X (\tilde X^\top  \tilde X)^{-1} \tilde X^\top  \Lambda_{\tilde C} \tilde X =  \Phi C \Phi^\top  \tilde X \\
    \implies & \tilde X (\tilde X^\top  \tilde X)^{-1} \tilde X^\top  \Lambda_{\tilde C} \tilde X =  \Lambda_{\tilde C} \tilde X \\
    \implies & P_{\tilde X} \Lambda_{\tilde C} \tilde X =  \Lambda_{\tilde C} P_{\tilde X} \tilde X
\end{align*}
where $P_{\tilde X}$ is the projection operator.
Note that $P_{\tilde X} \tilde X = \tilde X$.
Since $\Lambda_{\bar C}$ is diagonal, $P_{\tilde X}$ must also be diagonal in order for the matrices to commute. Furthermore, $P_{\tilde X}$ has exactly $k$ eigenvalues equal to 1 and the rest set to 0, since $X$ has rank $k$. Hence, $\tilde X$ must be of the form $I_{S_k} Q$ where $S_k$ selects a subset of $k$ dimensions and $Q \in \mathbb{R}^{k \times k}$ is an invertible matrix. Hence $X = \Phi_{S_k} Q$ where $\Phi_{S_k}$ is a subset of $k$ eigenvectors of the generalized eigenvalue problem. 
\end{proof}
\begin{lemma}\label{lemma:maximum}
    For matrices $C \in \mathbb{R}^{d \times d}$, $X \in \mathbb{R}^{d \times k}$ and an invertible matrix $D \in \mathbb{R}^{d \times d}$, the global maximum of
    \[ L(X) = \Tr [(X^\top  D X)^{-1} X^\top  C X] \]
    is $\sum_{i=1}^k \Lambda_k$ where $\Lambda$ are the eigenvalues of the generalized eigenvalue problem $(C, D)$.
\end{lemma}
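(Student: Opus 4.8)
The plan is to recognize $L(X)$ as a generalized trace‑ratio (Rayleigh–Ritz) objective and to invoke the Ky Fan maximum principle. Throughout I assume, as in the application where $D=I$, that $D$ is symmetric positive definite and $C$ is symmetric, and that $X$ has full column rank $k$ so that $X^\top D X$ is invertible; the claimed value $\sum_{i=1}^k \Lambda_i$ is read as the sum of the $k$ largest generalized eigenvalues. First I would remove $D$ from the picture: writing $D = D^{1/2} D^{1/2}$ and substituting $Y = D^{1/2} X$, the objective becomes $L = \Tr[(Y^\top Y)^{-1} Y^\top \tilde C Y]$ with $\tilde C = D^{-1/2} C D^{-1/2}$ symmetric. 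Then I would use the invariance $L(YQ) = L(Y)$ for any invertible $Q \in \mathbb{R}^{k\times k}$ (the two $(Y^\top D X)$-type factors both pick up a $Q$, which cancels) to restrict attention to $Y$ with orthonormal columns, for which $L = \Tr[Y^\top \tilde C Y]$.

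The core step is the Ky Fan inequality: $\max_{Y^\top Y = I_k} \Tr[Y^\top \tilde C Y] = \sum_{i=1}^k \lambda_i$, the sum of the $k$ largest eigenvalues of $\tilde C$. I would prove this by diagonalizing $\tilde C = \sum_{j=1}^d \lambda_j v_j v_j^\top$ with $\{v_j\}$ orthonormal and $\lambda_1 \ge \cdots \ge \lambda_d$, so that $\Tr[Y^\top \tilde C Y] = \sum_j \lambda_j \|Y^\top v_j\|^2 = \sum_j \lambda_j p_j$ with $p_j := v_j^\top Y Y^\top v_j$. Since $YY^\top$ is an orthogonal projection, $0 \le p_j \le 1$, and $\sum_j p_j = \Tr[YY^\top] = \Tr[Y^\top Y] = k$. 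Maximizing the linear functional $\sum_j \lambda_j p_j$ over this polytope is a linear program whose optimum is attained at a vertex, assigning $p_j = 1$ to the $k$ largest $\lambda_j$, which gives $\sum_{i=1}^k \lambda_i$; this value is achieved by choosing the columns of $Y$ to be $v_1, \dots, v_k$.

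It remains to identify the spectrum of $\tilde C$ with that of the generalized eigenproblem $(C,D)$: if $\tilde C w = \lambda w$, then $v := D^{-1/2} w$ satisfies $C v = D^{1/2}\tilde C D^{1/2} v = \lambda D v$, and this is a bijection because $D^{1/2}$ is invertible. Unwinding the substitution, the maximizer $Y = [\,v_1\ \cdots\ v_k\,]$ corresponds to $X = D^{-1/2} Y$, whose columns are the generalized eigenvectors for the $k$ largest generalized eigenvalues, and the maximum value equals $\sum_{i=1}^k \Lambda_i$, matching the claim (and consistent with Lemma~\ref{lemma:generalized_eigenvalue}, which already pins every critical point to a subset of $k$ generalized eigenvectors). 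I expect the only delicate point to be the reduction to orthonormal‑column $Y$: one must check the supremum over all full‑rank $X$ is unchanged under orthonormalization (the objective is continuous and $Q$‑invariant, so the orthonormalized representative attains the same value and is feasible), and that rank‑deficient $X$ cannot beat it, which follows by applying the Ky Fan bound on the lower‑dimensional subspace spanned by $Y$ together with a limiting argument.
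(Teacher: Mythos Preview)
Your proposal is correct but follows a genuinely different route from the paper. The paper's proof simply invokes Lemma~\ref{lemma:generalized_eigenvalue}, which has already classified every full-rank critical point of $L$ as $X=\Phi_{S_k}Q$ for some subset $S_k$ of $k$ generalized eigenvectors and invertible $Q$; substituting this form gives $L(X)=\sum_{i\in S_k}\Lambda_i$, so the global maximum over critical points is the top-$k$ sum. Your argument instead bypasses the critical-point classification: you linearly change variables via $Y=D^{1/2}X$ to reduce to the standard trace objective $\Tr[(Y^\top Y)^{-1}Y^\top\tilde C Y]$, use $GL(k)$-invariance to restrict to orthonormal $Y$, and then apply (and prove) the Ky Fan maximum principle directly. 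What the paper's approach buys is brevity given that Lemma~\ref{lemma:generalized_eigenvalue} is already in hand, though it tacitly assumes the supremum is attained at an interior critical point. What your approach buys is a self-contained variational upper bound that does not need the critical-point lemma at all and cleanly handles the attainment and rank-degeneracy issues you flag at the end; it also makes the positive-definiteness assumption on $D$ explicit, which the paper's statement leaves implicit.
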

\begin{proof}
    From lemma~\ref{lemma:generalized_eigenvalue}, we know that any critical point is of the form $\Phi_{S_k} Q$. Subtituting this into $L(X)$, we get
    \begin{align*}
        L(X) 
        &= \Tr[ (X^\top  D X)^{-1} X^\top  C X ] \\
        &= \Tr[ (Q^\top  Q)^{-1} Q^\top  \Phi_{S_k}^\top  C \Phi_{S_k} Q ] \\
        &= \Tr[ \Phi_{S_k}^\top  C \Phi_{S_k}] = \sum_{i \in S_k} \Lambda_i.
    \end{align*}
    The loss is maximized by the largest $k$ eigenvalues and minimized by the smallest $k$ eigenvalues.

\end{proof}

\begin{lemma}\label{lemma:mae_taylor1}
Under the Taylor series approximation of 
$f(X_R) \approx f(0) + X_R \nabla f(0)^\top   = X_R \nabla f(0)^\top $,
the MAE loss for a non-linear function $f$ is 
\[  \ell_m = ||X - (1-m) X \nabla f(0)^\top ||^2  + m(1-m) || G \nabla f(0)^\top  ||^2.  \]
\end{lemma}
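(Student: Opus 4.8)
The plan is to recognize this lemma as an immediate corollary of \cref{lemma:mae_expectation}. First I would substitute the stated Taylor approximation into the non-linear MAE loss $\ell_m = \mathbb{E}_R \lVert X - f(R\odot X)\rVert^2$. Writing $X_R = R \odot X$ and using $f(0) = 0$ so that $f(X_R) \approx X_R \nabla f(0)^\top$, the loss becomes
\[
    \ell_m = \mathbb{E}_R \left\lVert X - (R \odot X)\, \nabla f(0)^\top \right\rVert^2 .
\]

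Next I would observe that this is precisely the linear-MAE objective of \cref{eq:mae_objective} with the rank-$k$ product $AB$ replaced by the single $d \times d$ matrix $M := \nabla f(0)^\top$. The key point is that the computation in the proof of \cref{lemma:mae_expectation} never used the factorized structure $AB$: it only expanded the square, pushed the expectation inside the trace, and invoked the two identities $\mathbb{E}[R\odot X] = (1-m)X$ and $\mathbb{E}[(R\odot X)^\top(R\odot X)] = (1-m)^2 X^\top X + m(1-m)\,\mathrm{Blkdiag}_p(X^\top X)$, the latter being a consequence of the patch-wise Bernoulli structure of $R$. Hence that derivation applies verbatim with $AB$ replaced by $M$, giving
\[
    \ell_m = \lVert X - (1-m) X M\rVert^2 + m(1-m)\,\lVert G M\rVert^2 ,
\]
where $G$ satisfies $G^\top G = \mathrm{Blkdiag}_p(X^\top X)$ as before. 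Substituting $M = \nabla f(0)^\top$ back in recovers the claimed expression.

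There is no substantive obstacle here; the proof is essentially bookkeeping. The one point worth a remark is the use of $f(0) = 0$ to drop the constant term from the Taylor expansion — either one states this as an assumption on the network (the all-masked input maps to zero), or one carries the constant through and absorbs it, at the cost of a messier statement. I would also note, as the surrounding text already does, that the expansion around $0$ is only sensible for small masking ratios, since the perturbation $X_R - 0 = R \odot X$ is not small when $m$ is large.
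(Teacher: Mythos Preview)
Your proposal is correct and is essentially the paper's own argument: both substitute the Taylor approximation to obtain $\mathbb{E}_R\lVert X - (R\odot X)\nabla f(0)^\top\rVert^2$ and then use the two expectations $\mathbb{E}[R\odot X]=(1-m)X$ and $\mathbb{E}[(R\odot X)^\top(R\odot X)]=(1-m)^2X^\top X+m(1-m)\,\mathrm{Blkdiag}_p(X^\top X)$ to regroup into the two squared norms. The only cosmetic difference is that you invoke \cref{lemma:mae_expectation} as a black box (correctly noting that its proof never uses the factorization $AB$), whereas the paper re-expands the computation inline.
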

\begin{proof}
\begin{align*}
    \ell_m  &= \mathbb{E}_R ||X -  X_R \nabla f(0)^\top  ||^2 \\
    &= ||X||^2 
    + \mathbb{E}_R || X_R \nabla f(0)^\top  ||^2  
    -2\mathbb{E}_R \Tr( X^\top  X_R \nabla f(0)^\top ) \\
    &= \Tr( X^\top  X )  +
    (1-m)^2 \Tr \left( \nabla f(0) X^\top  X \nabla f(0)^\top  \right) \\
    &\qquad + m(1-m) \Tr \left( \nabla f(0)  \mathrm{Blkdiag}(X^\top  X) \nabla f(0)^\top  \right) \\
    &\qquad -2 (1-m) \mathbb{E}_R \left[ X^\top  X \nabla f(0)^\top  \right] \\
    &= ||X - (1-m) X \nabla f(0)^\top ||^2  + m(1-m) || G \nabla f(0)^\top  ||^2 .
\end{align*}
\end{proof}

\begin{lemma}\label{lemma:mae_taylor2}
Under the Taylor series approximation of 
$f(x_R) \approx f(x_{\mu}) +  \nabla f(x_{\mu}) (x_R - x_\mu)$,
the MAE loss, reduces to
\[ 
    \ell_m(x)  = ||x - f(x_{\mu})||^2 +  m(1-m)  \Tr\left(F_x \mathrm{Blkdiag} (x^\top  x)   \right).
\]
\end{lemma}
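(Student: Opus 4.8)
The plan is to expand the squared norm inside the expectation, using the same Bernoulli-masking moments as in Lemma~\ref{lemma:mae_expectation}, but now applied to a single sample $x$ together with the first-order Taylor expansion $f(x_R) \approx f(x_\mu) + \nabla f(x_\mu)(x_R - x_\mu)$, where $x_\mu = (1-m)x$. First I would write the residual as $x - f(x_R) \approx \bigl(x - f(x_\mu)\bigr) - \nabla f(x_\mu)(x_R - x_\mu)$, so that
\[
\ell_m(x) = \mathbb{E}_R \Bigl\| \bigl(x - f(x_\mu)\bigr) - \nabla f(x_\mu)(x_R - x_\mu)\Bigr\|^2 .
\]
Expanding gives three terms: $\|x - f(x_\mu)\|^2$, a cross term $-2\,\mathbb{E}_R\,(x - f(x_\mu))^\top \nabla f(x_\mu)(x_R - x_\mu)$, and the quadratic term $\mathbb{E}_R \|\nabla f(x_\mu)(x_R - x_\mu)\|^2$.

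The second step is to evaluate the $R$-expectations. Since $\mathbb{E}_R[x_R] = (1-m)x = x_\mu$, we have $\mathbb{E}_R[x_R - x_\mu] = 0$, so the cross term vanishes. For the quadratic term, $\mathbb{E}_R\|\nabla f(x_\mu)(x_R - x_\mu)\|^2 = \Tr\!\bigl(\nabla f(x_\mu)^\top \nabla f(x_\mu)\,\mathrm{Cov}_R(x_R)\bigr)$, and the covariance of $x_R$ under patchwise-Bernoulli masking is exactly $m(1-m)\,\mathrm{Blkdiag}_p(x^\top x)$ — this follows by the same computation as in Lemma~\ref{lemma:mae_expectation}, specialized to $n=1$ (the $(1-m)^2 x^\top x$ term is cancelled by centering at $x_\mu$ rather than $0$). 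Identifying $F_x = \nabla f(x_\mu)^\top \nabla f(x_\mu)$ as the (Gauss–Newton / Fisher) information matrix at $x$ then yields
\[
\ell_m(x) = \|x - f(x_\mu)\|^2 + m(1-m)\,\Tr\!\bigl(F_x\,\mathrm{Blkdiag}_p(x^\top x)\bigr),
\]
which is the claimed identity.

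The only genuine subtlety — and the step I would be most careful about — is bookkeeping around which point the Taylor expansion is anchored. Anchoring at $x_\mu$ (rather than at $0$, as in Lemma~\ref{lemma:mae_taylor1}) is what makes the cross term vanish cleanly and collapses the quadratic contribution to just the block-diagonal covariance, with no leftover $(1-m)^2 x^\top x$ piece; anchoring elsewhere would leave extra terms. I would also note that this is an approximate identity, valid to the order at which the Taylor expansion is valid, and state explicitly that $F_x$ denotes $\nabla f(x_\mu)^\top \nabla f(x_\mu)$ so the reader can match it to the Fisher-information interpretation. Everything else is the routine expansion of a quadratic plus the masking-moment computation already established earlier in the appendix.
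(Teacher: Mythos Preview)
Your proposal is correct and follows essentially the same route as the paper's own proof: expand the squared norm of $(x - f(x_\mu)) - \nabla f(x_\mu)(x_R - x_\mu)$, kill the cross term using $\mathbb{E}_R[x_R - x_\mu] = 0$, and identify the quadratic term as $\Tr\bigl(\nabla f(x_\mu)^\top \nabla f(x_\mu)\,\mathrm{Cov}_R(x_R)\bigr) = m(1-m)\,\Tr\bigl(F_x\,\mathrm{Blkdiag}_p(x^\top x)\bigr)$. Your added remarks on why anchoring at $x_\mu$ (rather than $0$) makes the cross term vanish and eliminates the $(1-m)^2 x^\top x$ piece are more explicit than the paper's computation and would be a welcome clarification.
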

\begin{proof}
\begin{align*}
    \ell_m(x)  &= ||x - f(x_{\mu}) - \nabla f(x_{\mu}) (x_R - x_\mu)||^2 \\
    &= ||x - f(x_{\mu})||^2 +  \mathbb{E}_R ||\nabla f(x_{\mu}) (x_R - x_\mu)||^2  \\ 
    &\qquad +2\mathbb{E}_R (x - f(x_{\mu}))^\top  (\nabla f(x_{\mu}) (x_R - x_\mu))  \\
    &= ||x - f(x_{\mu})||^2 + \\
    &\qquad \quad \mathbb{E}_R (x_R - x_\mu)^\top  \nabla f(x_{\mu})^\top  \nabla f(x_{\mu}) (x_R - x_\mu)  \\
    &= ||x - f(x_{\mu})||^2 + \\
    &\qquad \mathbb{E}_R \Tr \left(\nabla f(x_{\mu})^\top  \nabla f(x_{\mu}) (x_R - x_\mu)(x_R - x_\mu)^\top    \right) \\
    &= ||x - f(x_{\mu})||^2 +  m(1-m)  \Tr\left(F_x \mathrm{Blkdiag} (x^\top  x)   \right).
\end{align*}
\end{proof}

\begin{lemma}\label{lemma:mae_functional}
The masked autoencoder loss, for a input image $x$ and linear functional $f$ is
\[ \ell_m =  ||x - f(\mu)||^2 - ||f(\mu)||^2 + \int || f(r \odot x)||^2 \ \mathrm{d}r, \]
where $\mu = \int (r \odot x) \ \mathrm{d}r $
\end{lemma}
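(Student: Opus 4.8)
The plan is to expand the square inside the integral that defines $\ell_m$ and integrate term by term, using only two facts: that $f$ is linear, and that the integral $\int \cdot \, \mathrm{d}r$ is taken against a normalized measure over masks, so $\int \mathrm{d}r = 1$.

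First I would write
\[
\ell_m = \int \Big( \|f(r \odot x)\|^2 - 2\langle f(r \odot x), x\rangle + \|x\|^2 \Big)\, \mathrm{d}r .
\]
The last term integrates to $\|x\|^2$ since the mask measure has unit mass. For the middle term, I would move the integral inside the inner product and then inside $f$, using linearity: $\int f(r \odot x)\, \mathrm{d}r = f\big(\int r \odot x \, \mathrm{d}r\big) = f(\mu)$, so the cross term contributes $-2\langle f(\mu), x\rangle$. This gives the intermediate identity
\[
\ell_m = \int \|f(r \odot x)\|^2 \, \mathrm{d}r - 2\langle f(\mu), x\rangle + \|x\|^2 .
\]
Then I would complete the square in the remaining two scalar terms, $\|x\|^2 - 2\langle f(\mu), x\rangle = \|x - f(\mu)\|^2 - \|f(\mu)\|^2$, and substitute this back in to obtain exactly the claimed expression
\[
\ell_m = \|x - f(\mu)\|^2 - \|f(\mu)\|^2 + \int \|f(r \odot x)\|^2\, \mathrm{d}r .
\]

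There is no substantive obstacle here; the computation is a one-line expansion. The only points requiring a word of care are (i) that $\int \cdot\, \mathrm{d}r$ is a probability average over masks, so that constants pass through the integral unchanged, and (ii) that the linearity of $f$ is what licenses exchanging $f$ with the integral in the cross term — in the discrete-mask formulation this exchange is just linearity of a finite sum, and in the continuous-mask limit it follows from linearity together with the usual integrability/continuity assumptions implicit in writing the functional objective.
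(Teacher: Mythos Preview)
Your proof is correct and uses essentially the same ingredients as the paper: expand the squared norm, use linearity of $f$ to pass the mask integral through and obtain $\int f(r\odot x)\,\mathrm{d}r=f(\mu)$, and regroup. The only cosmetic difference is that the paper first inserts $\pm f(\mu)$ inside the norm (a bias--variance decomposition, so the cross term $\int\langle x-f(\mu),f(r\odot x)-f(\mu)\rangle\,\mathrm{d}r$ vanishes) and then expands the variance piece, whereas you expand directly and complete the square at the end---both routes are equivalent one-line computations.
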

\begin{proof}
\begin{align*}
    \ell_m 
    &= \int || x - f(r \odot x)||^2 \ \mathrm{d} r \\
    &= \int || x - f(\mu) - (f(r \odot x) - f(\mu))||^2 \ \mathrm{d} r \\
    &= \int ||f(\mu) - x||^2 + || f(r \odot x)- f(\mu)||^2  \ \mathrm{d} r \\
    &\qquad - \int 2 \langle x - f(\mu),  f(r \odot x)- f(\mu) \rangle \mathrm{d} r \\
    &= \int  ||f(\mu) - x||^2 + || f(r \odot x)- f(\mu)||^2 \ \mathrm{d} r \\
    &= || x - f(\mu)||^2 - ||f(\mu)||^2 + \int ||f(r \odot x)||^2 \ \mathrm{d} r.
\end{align*}
\end{proof}

\section{Additional experiments}\label{app:expts}
\begin{figure}
    \centering
    \includegraphics[width=\linewidth]{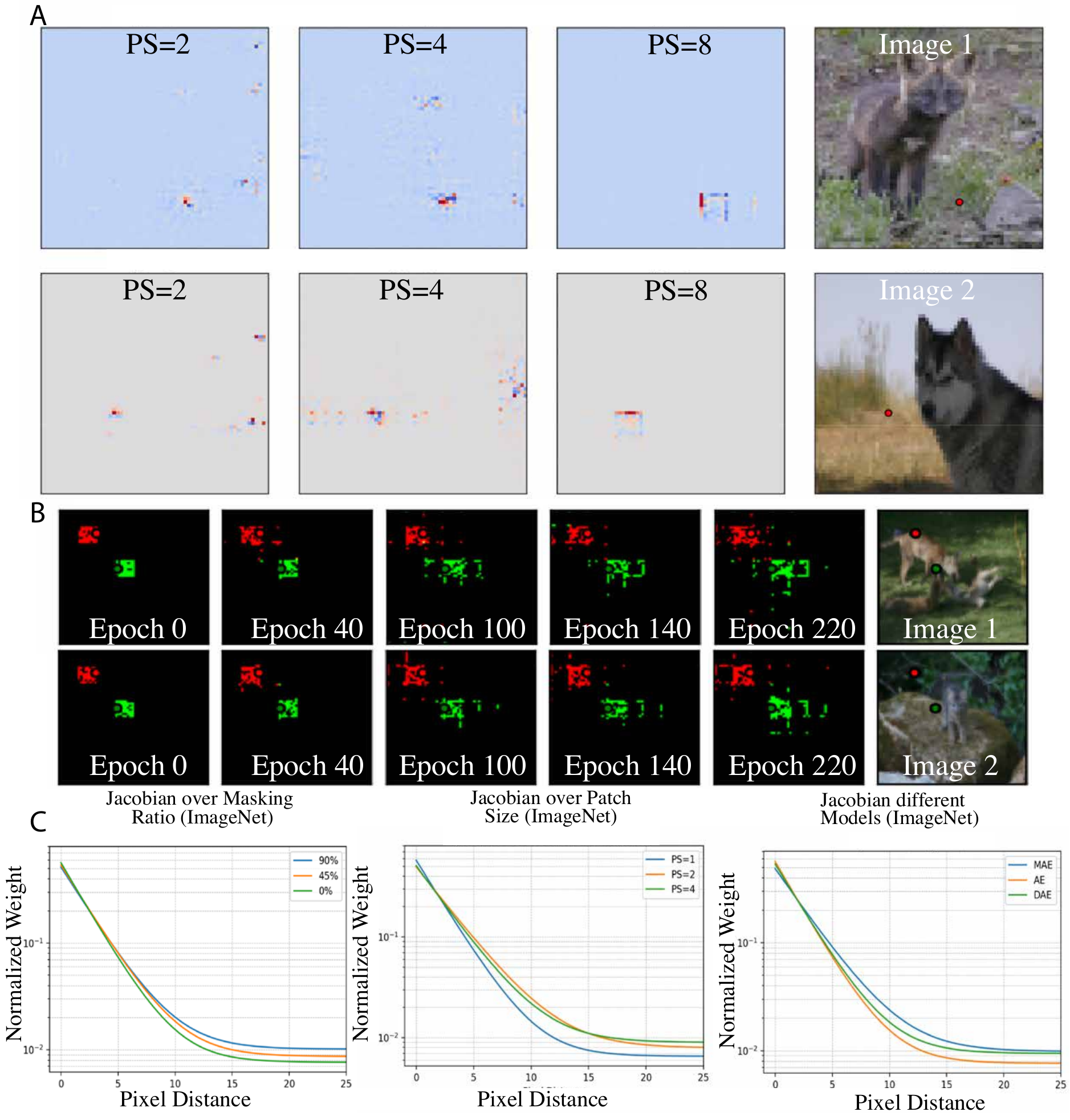}
    \caption{Analogous to Figs. 5, 6, 8, 9, but for the ImageNet-64 dataset. A) Visualization of the Jacobian ($m = 0.8$) for nonlinear MAE. B) Jacobian across different stages of training for a specific output pixel ($ps = 8$, $m = 0.8$) for nonlinear MAE. C) Normalized weight for different hyper-parameters of a linear MAE, AE and DAE.}
    \label{fig:imgnet-jacob}
\end{figure}
\paragraph{Additional details about MAE pretraining}

We train MAEs using the architecture in~\citet{He2021MAE}.
We divide the image into patches of size $p$ and randomly mask a fraction $m$ of the patches before feeding it to the MAE.
The encoder projects the unmasked patches to a $d$-dimensional embedding using a linear layer. The sequence of patches are then fed to a series of Transformer blocks.
The decoder adds a learnable vector and position encoding for every masked patch and reconstructs the masked patches. 

\begin{figure}[!htb]
    \centering
    \includegraphics[width=0.9\linewidth]{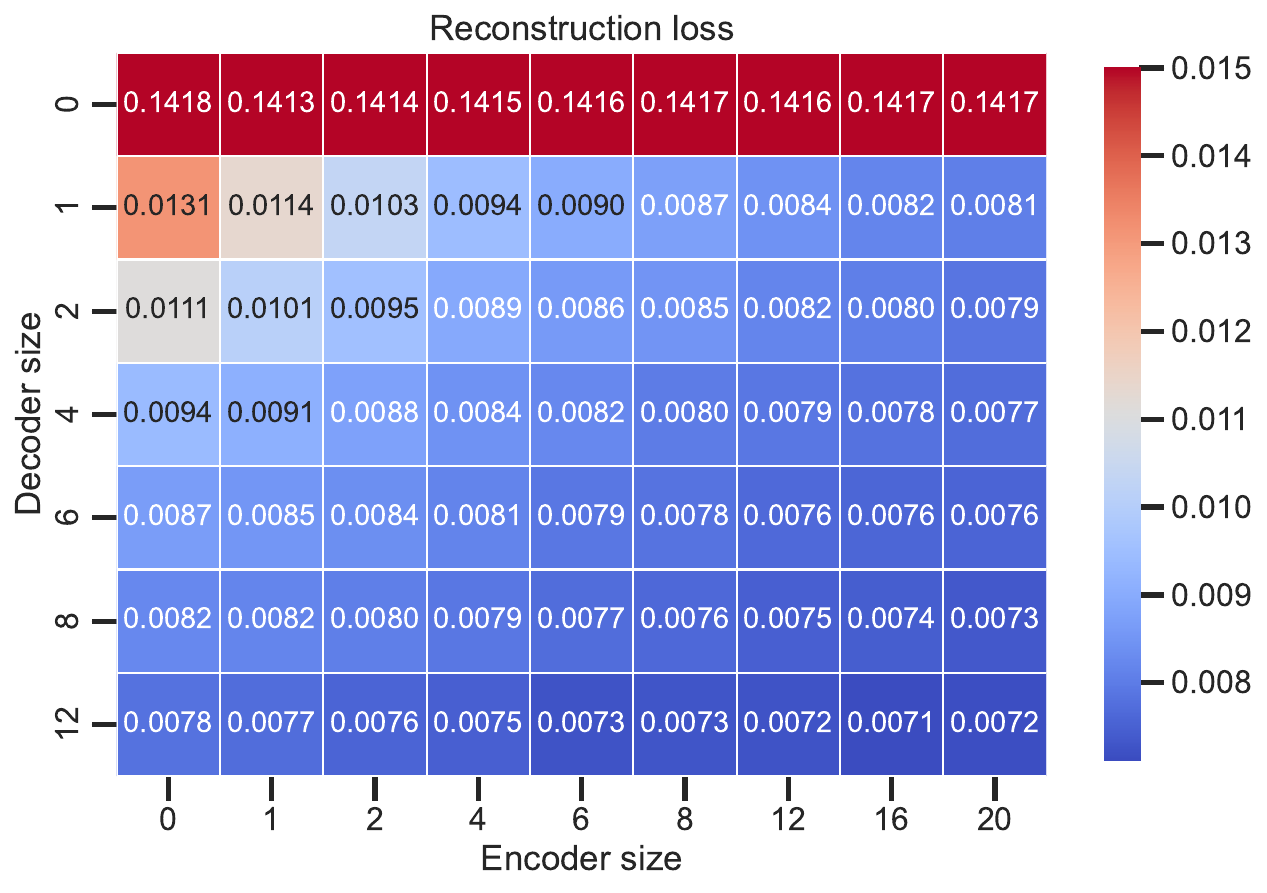}
    \includegraphics[width=0.9\linewidth]{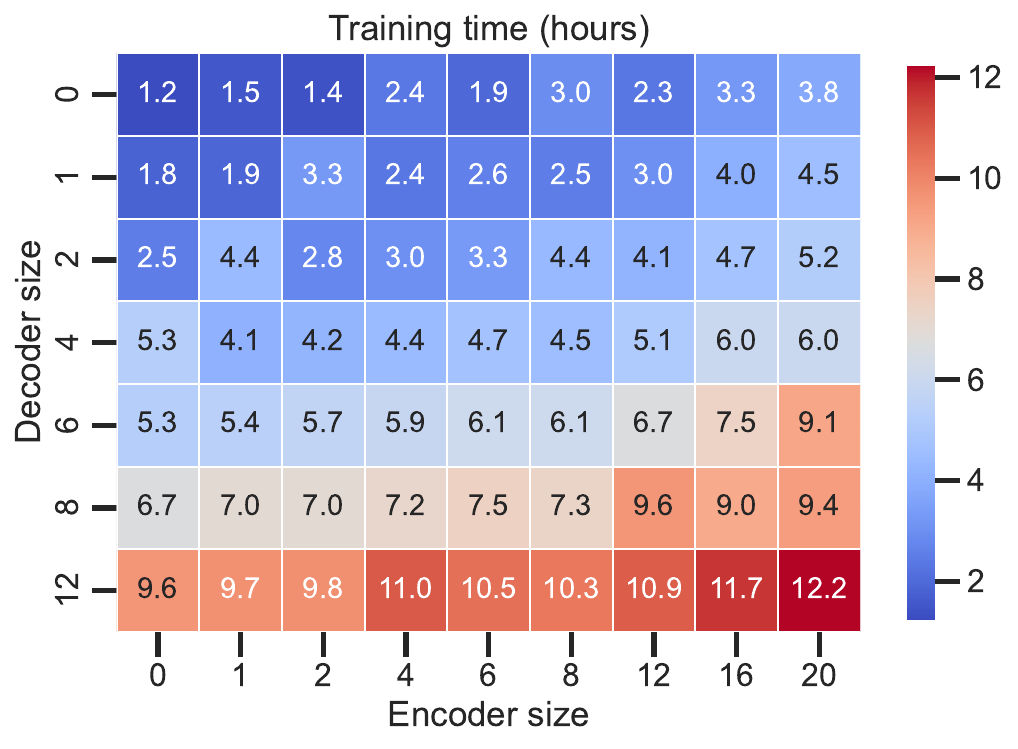}
    \caption{
        We vary the number of encoder and decoder layers and record the reconstruction loss at the end of training and the time required to train the MAEs. Note that \textbf{MAEs are slow to train} with training time growing faster the the size of the decoder.
    The reconstruction loss becomes smaller with increasing size of both the decoder and the encoder. However, we find that the \textbf{training loss is not a good proxy for downstream task performance}.}
    \label{fig:decoder_size_loss_time}
\end{figure}

\paragraph{Number of Encoder and decoder layers}
We train MAEs on CIFAR10 for different encoder and decoder sizes.
We find that the reconstruction loss decreases with increasing size of both the encoder and the decoder (\cref{fig:decoder_size_loss_time}). 
However, the training time grows faster than the size of the decoder, making it computationally expensive to train large decoders.
We also find that training loss is not a good proxy for downstream task performance. We evaluate the performance of the trained encoder using linear probing and find that the accuracy improves as we increase the size of the encoder. However, the optimal decoder size is 2-4 layers (\cref{fig:decoder_size_acc}).

If the model are trained only using the supervised loss, i.e., we do no MAE pretraining, then the accuracy on CIFAR plateaus around 83-84\%. In fact the accuracy for a 20-layer network is worse than the accuracy for a 12-layer network which differs from the trend for masked autoencoders.

\begin{figure}[!htb]
    \centering
    \includegraphics[width=0.89\linewidth]{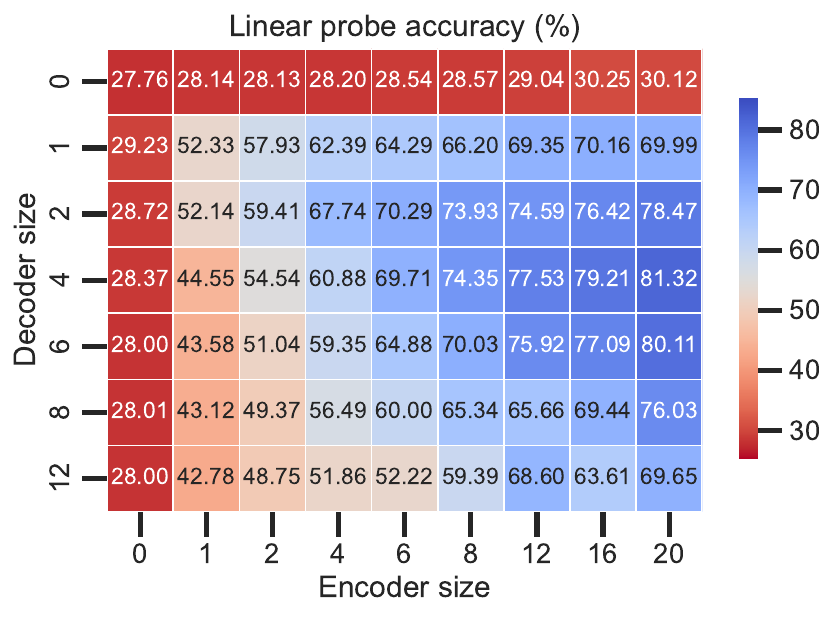}
    \includegraphics[width=0.89\linewidth]{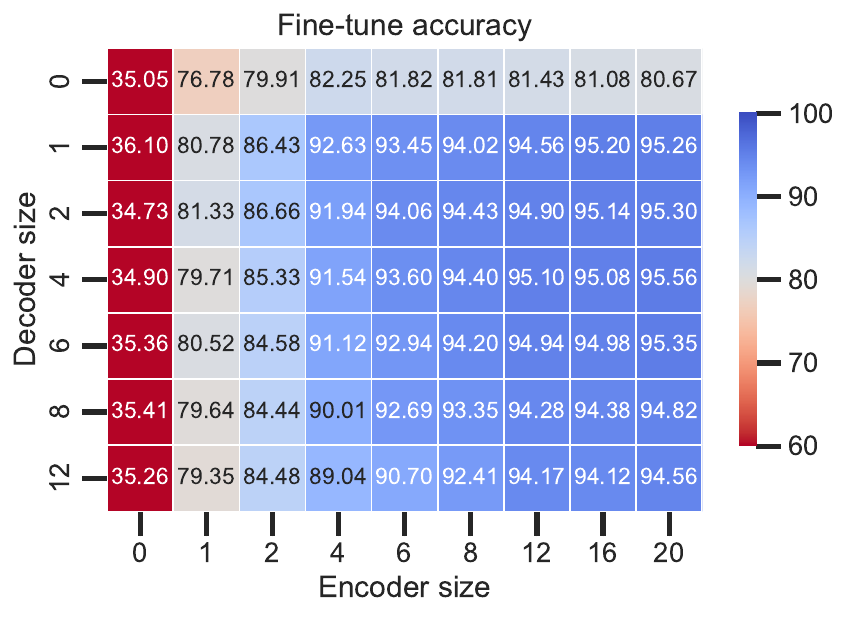}
    \caption{
        We vary the number of encoder and decoder layers (transformer blocks) and plot the linear probe accuracy (left) and the accuracy after fine-tuning for 100 epochs.
    The \textbf{accuracy of the trained encoder continues to improve as we increase its size}. Linear probe accuracies are usually indicative of performance after fine-tuning.}
    \label{fig:decoder_size_acc}
\end{figure}

\begin{figure}[!htb]
    \centering
    \includegraphics[width=0.48\linewidth]{imgs/expts/mp_loss.pdf}
    \includegraphics[width=0.49\linewidth]{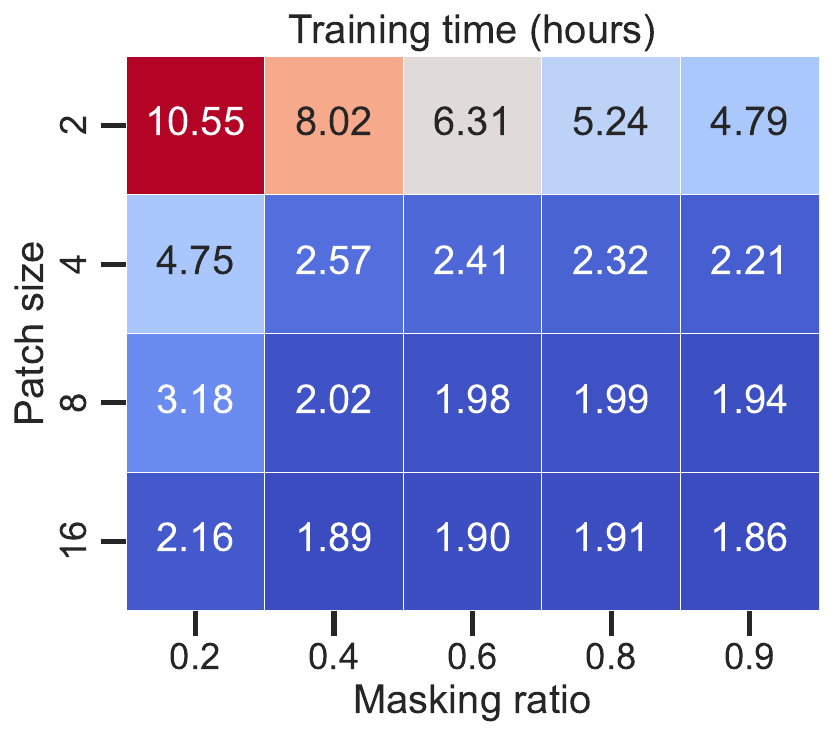}
    \caption{
        We vary the masking ratio and patch-size of the masked autoencoder. While larger masking ratio lead to smaller training times, even smaller masking ratios work quite well. Smaller patch-sizes are a lot slower to train but usually perform better than larger patch-sizes.
    }
\label{fig:mask_patch_loss_time}
\end{figure}

\begin{figure}[!htb]
    \centering
    \includegraphics[width=0.49\linewidth]{imgs/expts/mp_acc_linear.pdf}
    \includegraphics[width=0.49\linewidth]{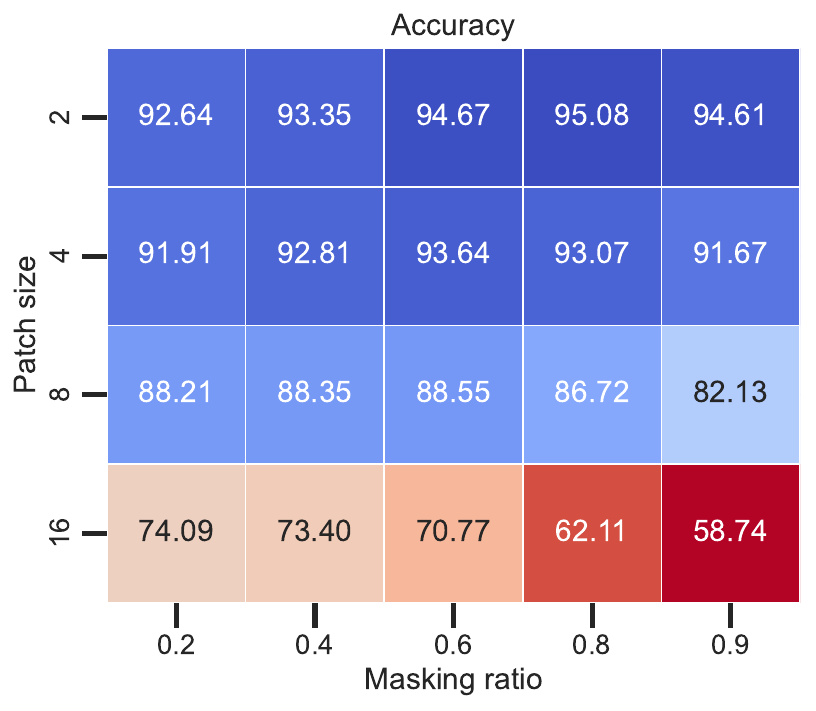}
    \caption{
        We plot the linear probe (Left) and accuracy after fine-tuning (right) for different patch-size and masking ratio. Patch-sizes of 2 and 4
    }
    \label{fig:mask_patch_acc}
\end{figure}

\paragraph{Patch-size vs. Masking ratio}
The masking ratio and patch-size control the basis learned by the MAE. We surprisingly find that many different parameters work surprisingly well for downstream task accuracy. We also note that reconstruction loss is not indicative of downstream task performance.

\paragraph{Are long training times even necessary?}
MAEs are typically trained for a large number of epochs and the reconstruction error continues to decrease over the course of training.
However, the reconstruction error is not predictive of both the linear-probe and fine-tuning accuracies.
In~\cref{fig:decoder_time}, we consider multiple checkpoints over the course of pretraining and plot the number of pretraining epochs against the linear probe accuracy of that checkpoint.
We find that the \textbf{linear probe accuracy continues to increase even after 1500 epochs of training} particularly for larger models, justifying the need to pretrain for a large number of epochs (see \cref{fig:decoder_size_acc}).
\begin{figure}[!htb]
    \centering
    \includegraphics[width=0.80\linewidth]{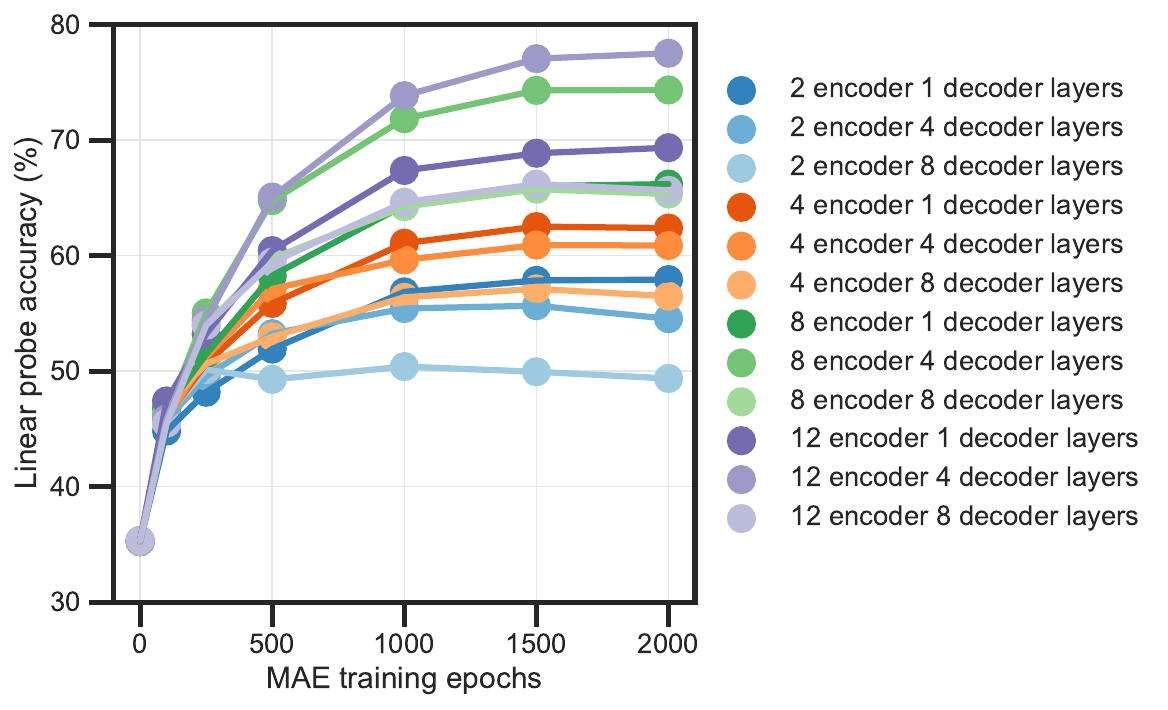}
    \caption{
        We plot the linear probe accuracies of different masked autoencoders over the course of training. They accuracy continue to increase even after 1000 epochs of training, justifying the need for long training times. Larger models tend to require longer training times.
    }
    \label{fig:decoder_ckpt}
\end{figure}

\paragraph{Centered kernel alignment} or
CKA~\citep{kornblith2019similarity} measures the similarity between representations of two different networks. We use CKA to measure the similarity between the representations of MAEs trained with different number of encoder layers and with 4 decoder layers. White indicates that the similarity is high and black/red indicates that the similarity is low. We find that the larger networks are more similar to the 12-layer encoder while while the smaller networks are less similar to the 12-layer network, particularly at the last layer.

\begin{figure}
    \centering
    \includegraphics[width=0.6\linewidth]{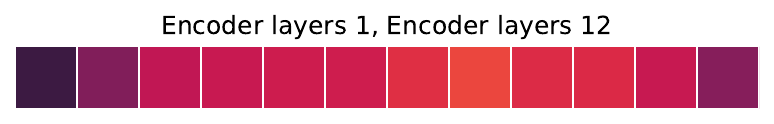}
    \includegraphics[width=0.6\linewidth]{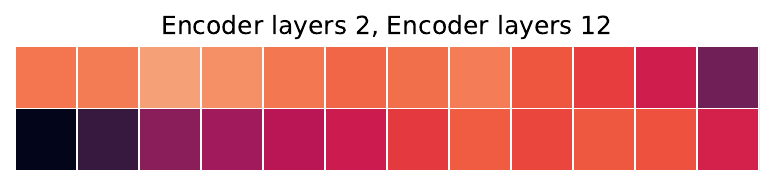}
    \includegraphics[width=0.6\linewidth]{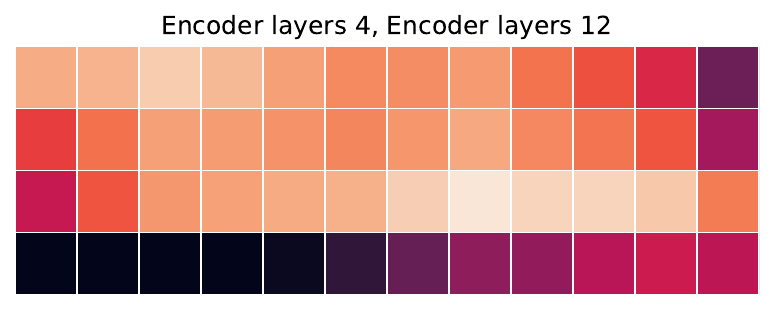}
    \includegraphics[width=0.6\linewidth]{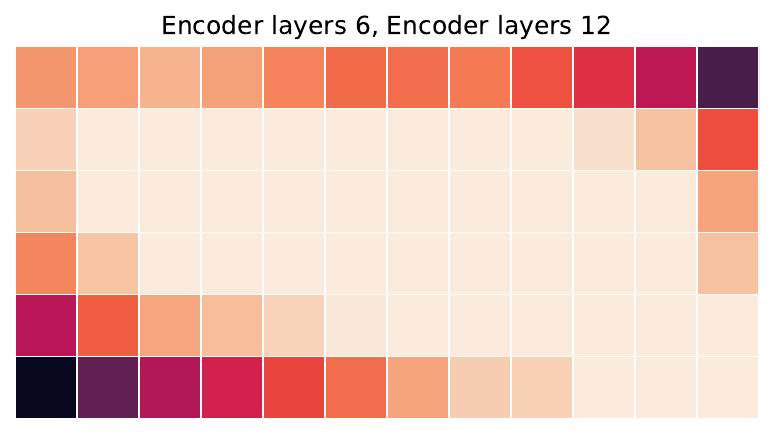}
    \caption{CKA between MAEs trained with different number of encoder layers. Each row and column corresponds to the similarity between a $k$-layer encoder and the 12-layer encoder (hence 12 columns). The darker shades indicate that the representations for those two layers are not similar.}
    \label{fig:cka}
\end{figure}

\section{More experiments with the Ising model}

\begin{figure}[H]
    \centering
    \includegraphics[width=0.48\linewidth]{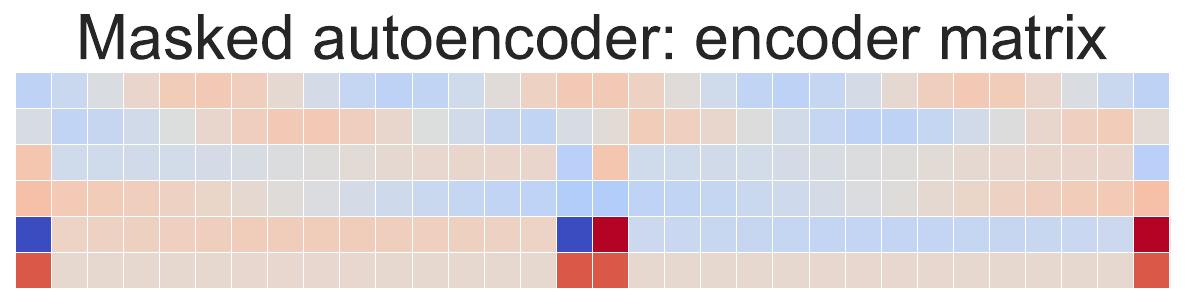}
    \includegraphics[width=0.48\linewidth]{imgs/linear_mae/A_p8_k6_m50.pdf}
    \includegraphics[width=0.48\linewidth]{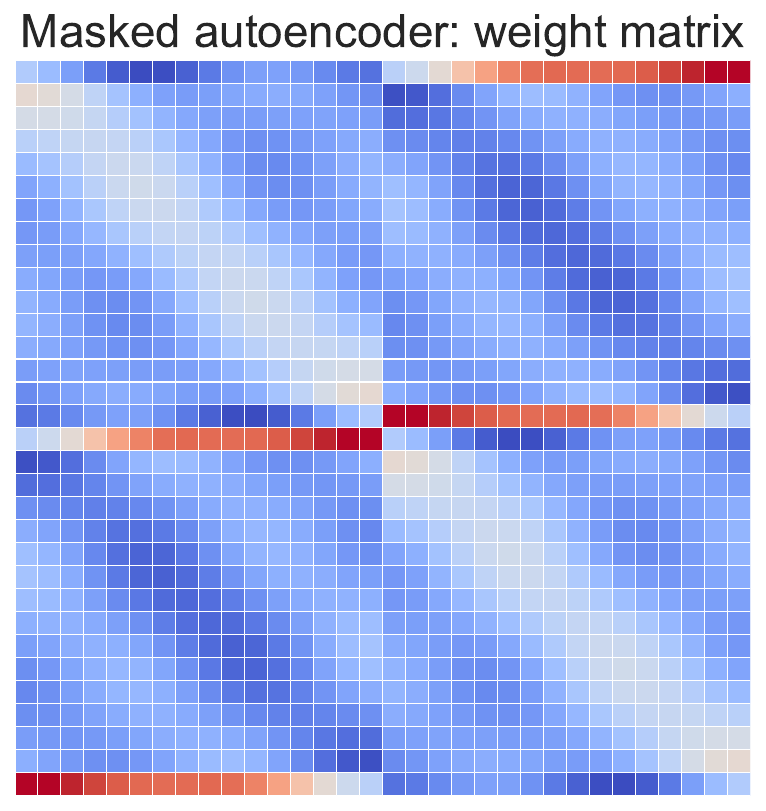}
    \includegraphics[width=0.48\linewidth]{imgs/linear_mae/W_p8_k6_m50.pdf}
    \caption{
        We plot the weight matrix $AB$ and the encoder matrix $A$ for \textbf{(left)} patch-size 16 and masking ratio of 0.5 and \textbf{(right)} patch-size 8 and masking ratio 0.5. 
        Increasing the patch-size while keeping the masking ratio fixed biases the encoder towards features that capture long-range correlations.
    }
    \label{fig:app_ising_mae1}
\end{figure}

\begin{figure}[H]
    \centering
    \includegraphics[width=0.48\linewidth]{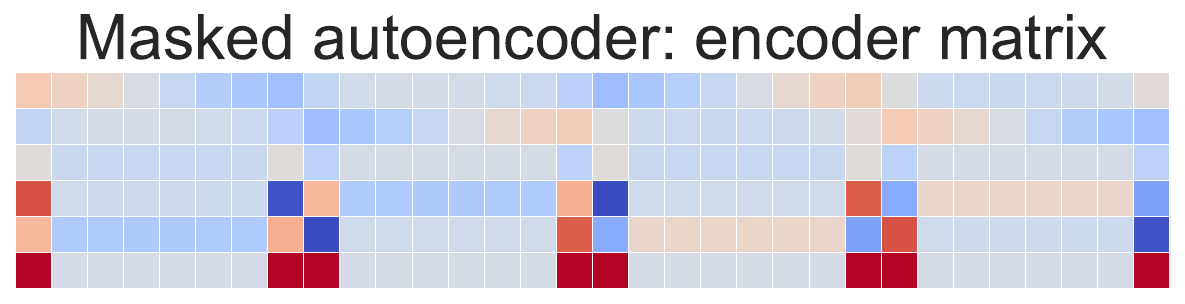}
    \includegraphics[width=0.48\linewidth]{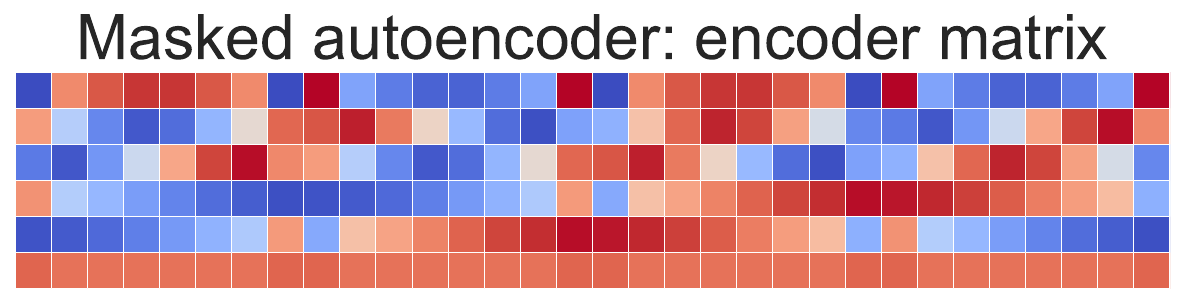}
    \includegraphics[width=0.48\linewidth]{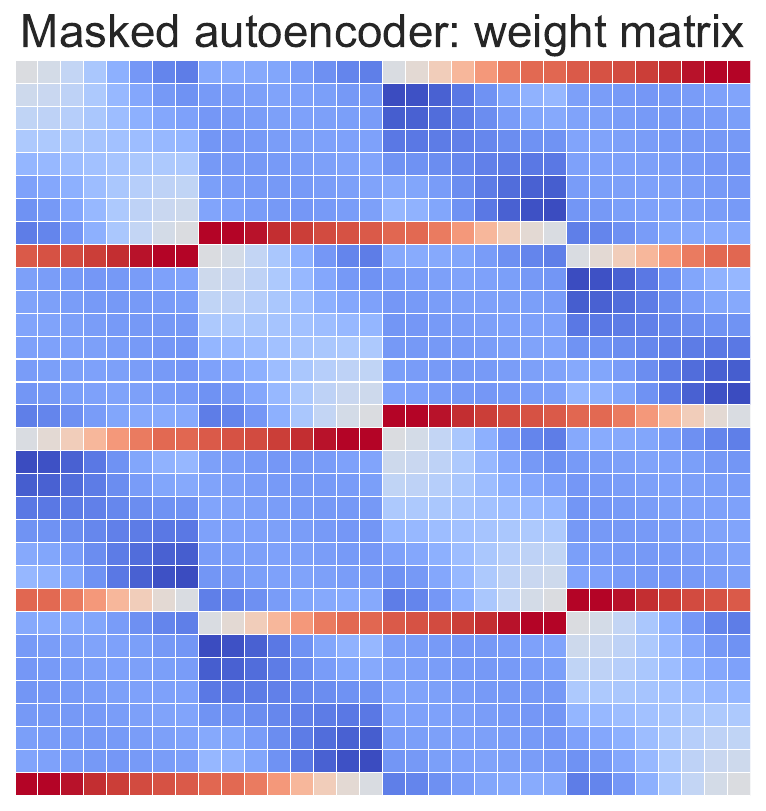}
    \includegraphics[width=0.48\linewidth]{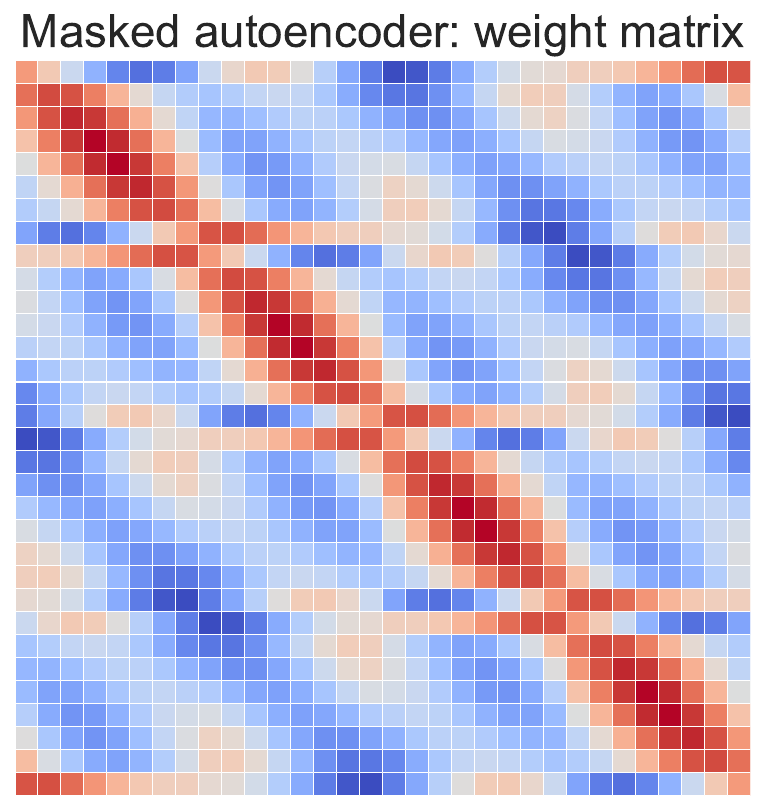}
    \caption{
        We plot the weight matrix $AB$ and the encoder matrix $A$ for \textbf{(left)} patch-size 16 and masking ratio of 0.99 and \textbf{(right)} patch-size 8 and masking ratio (0.01). 
        Reducing the masking ratio biases the encoder towards features based on local correlations while increasing the masking ratio prioritizes features that capture long range correlations.
    }
    \label{fig:app_ising_mae2}
\end{figure}

\section{MAEs in the Frequency Domain}\label{app:freq}
\begin{figure}[H]
    \centering
\includegraphics[width=\linewidth]{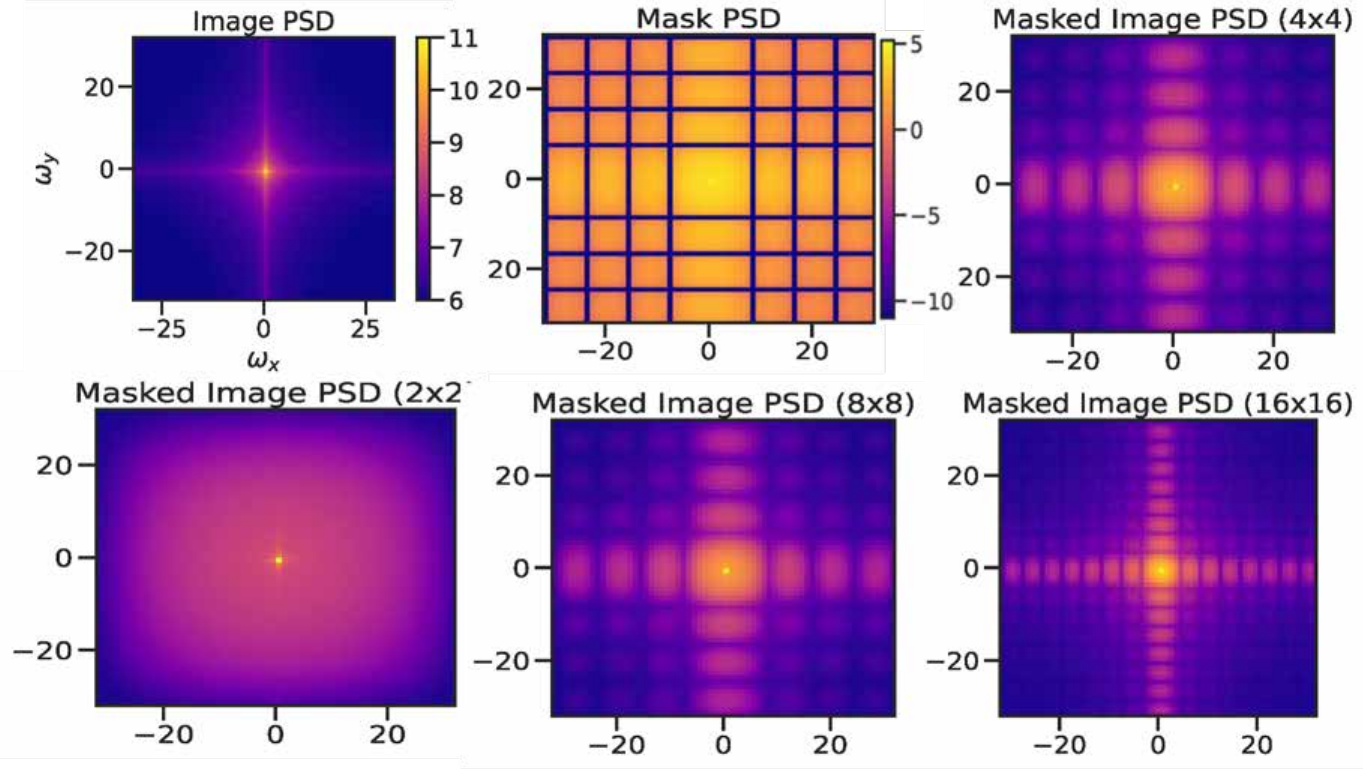}
    \caption{Masking using square patches in a MAE zeros out frequencies in a grating-like pattern in Fourier space~(color bar in image log power spectral density used for all plots, but mask). }
    \label{fig:mae-spec}
\end{figure}

MAEs mask a part of the input image (20\% of the patches), usually multiple square patches, and train an encoder-decoder pair to reconstruct the intensity at the masked patches.
\cref{fig:mae-spec} (top) shows the power spectral density (PSD), i.e., squared amplitude at different spatial frequencies, of the original image (left), the mask itself (top, middle) and masked images using different patch sizes.
The Discrete Fourier Transform~(DFT) magnitude of these masks is a sinc function modulated by a sum of complex exponentials~\cref{eq:fft_abs}. This can be shown in a 1D setting by considering a discrete signal $x \in \mathbb{R}^D$. For MAEs with patch size $p$, we parameterize each individual mask using a rectangular pulse defined by function $r$ as 

\begin{equation}
r[n] = u[n] - u[n - p],
\end{equation}

where $p$ is the patch size, n is the discrete time index and $u[n]$ is the Heaviside step function. A MAE mask $m \in \mathbb{R}^D$ is constructed as a sum of such rectangular pulses:

\begin{equation}
m[n] = \sum_{i=1}^N r[n-a_i],
\end{equation}
where $a_i$ denotes the starting index of the ith mask. 
The masked signal is then given by elementwise multiplication of the signal $x$ and the mask $m$:

\begin{equation}
y[n] = x[n] \cdot m[n].
\end{equation}

In the frequency domain, by the multiplication property of the DFT, this becomes:

\begin{equation}
Y[k] = \frac{1}{D} X[k] * M[k],
\end{equation}

where $X[k]$ and $M[k]$ are the DFTs of $x[n]$ and $m[n]$, respectively and $k\in [0,1,...,N-1]$ is the frequency index.

Using the time-shift and linearity properties of the DFT, the transform of the mask~$m[n]$ can be written as:

\begin{equation}
M[k] = \sum_{i=1}^N e^{-j 2\pi k a_i} \cdot R[k],
\end{equation}

where the DFT of $r[n]$ is given by:

\begin{equation}
R[k] = e^{-j \pi k (p - 1)/D}  \frac{\sin(\pi pk/D)}{\sin(\pi k/D)}.
\end{equation}

Hence, applying MAE-style masking results in a spectral smoothing effect analogous to sinc filtering, as illustrated in \cref{fig:mae-spec}.
\begin{equation}
\left|R[k]\right| = \left|\frac{\sin(\pi pk/D)}{\sin(\pi k/D)}\right|  \left|\sum_{i=1}^N e^{-j 2\pi k a_i}\right|. \label{eq:fft_abs}
\end{equation}

A square patch therefore corresponds to masking frequencies in a grating-like pattern. Masking in an MAE therefore corresponds to zeroing out certain frequencies. The goal of an MAE is to interpolate the value of the masked frequencies from the unmasked version of the spectrum.  

\section{Related work}\label{s:app_related}
\paragraph{Masked Autoencoders.} The goal of visual representation learning is to develop representations that go beyond the identity function~\cite{Steck2020AutoencodersTD} and instead are predictive of downstream visual tasks. Early methods in this field sought to prevent trivial identity mappings by incorporating various noise distributions, such as blankout noise in Dropout~\cite{Srivastava2014DropoutAS} or Marginalized Autoencoders~\cite{Chen2015MarginalizingSL}, and Gaussian noise in Denoising Autoencoders~\cite{vincent2010stacked}. A key limitation of these earlier approaches is that they were developed before deep networks could be easily trained end-to-end, resulting in reliance on greedy layer-wise training, which does not generalize as well as full end-to-end optimization.
Modern methods, such as Data2Vec~\cite{baevski2022data2vec} and BERT~\cite{Devlin2019BERTPO}, overcome this limitation by training Vision Transformers (ViTs) end-to-end using blackout noise, achieving state-of-the-art representation performance. The core innovation of modern MAEs~\cite{He2021MAE, Xie2021SimMIMAS} lies in their architectural design, which encodes only masked image patches, significantly reducing computational costs for masked prediction strategies. Recent advancements in MAEs include cross-attention mechanisms for efficient computation~\cite{fu2024rethinking}, frequency-based loss functions to mitigate oversmoothing~\cite{xie2023masked, liu2023devil}, and intermediate perception reconstruction instead of direct output reconstruction~\cite{Wei2021MaskedFP, shi2023flowformer++}. Additional innovations involve task-guided masking strategies~\cite{li2022semmae, Fan2023MotionGuidedMF, xie2024self}, distillation-based training~\cite{Bai2022MaskedAE}, and numerous other techniques~\cite{hondru2024maskedimagemodelingsurvey}.\\

\end{document}